\title{AAAI Press Anonymous Submission\\Instructions for Authors Using \LaTeX{}}
\author {
    % Authors
    Junli Jiang\textsuperscript{\rm 1},
    Pavel Naumov\textsuperscript{\rm 2}
}
\title{Higher-Order Responsibility}
\newtheorem{definition}{Definition}
\newtheorem{theorem}{Theorem}
\newtheorem{lemma}{Lemma}
\newtheorem{claim}{Claim}
\newenvironment{proof}{\begin{trivlist}\item\noindent{\sc Proof.}}{\hfill$\Box\hspace{2mm}$\end{trivlist}}
\newenvironment{proof-of-claim}{\begin{trivlist}\item\noindent{\sc Proof of Claim.}}{\hfill $\boxtimes\hspace{2mm}$\end{trivlist}}
\renewcommand{\phi}{\varphi}
\renewcommand{\epsilon}{\varepsilon}
\newcommand{\GF}{{\sf GF}}
\newcommand{\R}{{\sf R}}
\newcommand{\G}{{\sf G}}
\begin{document}

\date{}

\maketitle

\begin{abstract}
In ethics, individual responsibility is often defined through Frankfurt's principle of alternative possibilities. This definition is not adequate in a group decision-making setting because it often results in the lack of a responsible party or ``responsibility gap''. One of the existing approaches to address this problem is to consider group responsibility. Another, recently proposed, approach is  ``higher-order'' responsibility. The paper considers the problem of deciding if higher-order responsibility up to degree $d$ is enough to close the responsibility gap. The main technical result is that this problem is $\Pi_{2d+1}$-complete.
\end{abstract}

% \begin{links}
%     %\link{Code}{https://aaai.org/example/code}
%     %\link{Datasets}{https://aaai.org/example/datasets}
%     \link{Extended version}{https://arxiv.org/abs/2506.01003}
% \end{links}

\section{Introduction}

Autonomous agents--from self-driving cars and stock traders to military robots and medical assistants--are increasingly involved in decisions that impact human lives. These decisions often result from collaborations among multiple parties in hybrid human-machine environments. Assigning individual responsibility for the outcomes of such decisions to at least one party fosters accountability and enhances trust.

\subsubsection*{Counterfactual Responsibility}

As a concept, responsibility has been studied in philosophy, law, and, more recently, artificial intelligence. In philosophy, the starting point of the discussion of responsibility is often the so-called {principle of alternative possibilities}: ``{\em 
 a person is morally responsible for what he has done only if he could have done otherwise}''~\cite{f69tjop}. The arguments for and against this principle have been proposed~\cite{b71jp,n84ps,c97mous,w17,mr18ar}. Even Frankfurt argued that its applicability is limited. In this paper, we use the term {\em counterfactual responsibility} to refer to the definition through the principle of alternative possibilities. Following recent literature in artificial intelligence~\cite* {ydjal19aamas,nt19aaai,nt20aaai,bfm21ijcai,s24aaai}, we interpret ``could have done'' as having a {strategy} that would {\em guarantee} the prevention of the undesired outcome no matter what the other agents do. 
 
As an example, imagine two factories, A and B, that can potentially dump a pollutant into a river inhabited by fish~\cite{h16}. Assume that it is well-known that it takes 15kg of pollutants to kill the fish. 
Consider the case when factories A and B have accumulated 20kg and 10kg of the pollutant, respectively. Suppose they both dump the pollutant into the river. The fish is dead. Which of the two factories is responsible for the death of the fish? Note that factory A has had a strategy (``do not dump the pollutant'') that guarantees that the fish would have stayed alive no matter what the other factory does. Factory B has not had such a strategy. Thus, only factory A is counterfactually responsible for the death of the fish. 

\subsubsection*{Deontic Constraint}

Let us now consider a different case in which factories A and B have each accumulated only 10kg of the pollutant. Thus, the fish will die only if both of the factories dump the pollutant.
In such a situation, it would be acceptable for either of the factories, but not both, to dump the pollutant. We use propositional variables $p_a$ (``pollute $a$'') and $p_b$ (``pollute $b$'') to represent the actions of factories A and B, respectively. If $p_x=1$, then factory X dumps the pollutant. If $p_x=0$, it does not dump. We use the {\em deontic constraint} formula $\gamma=\neg(p_a\wedge p_b)$ to express which combinations of actions are acceptable. In our case, the deontic constraint $\gamma$ states that it is not acceptable for both companies to dump the pollutant.  Let us further suppose that both factories have dumped the pollutant, and the fish is dead. Who is responsible? In this setting, each of the two companies has had a strategy (``do not dump the pollutant'') to save the fish. Thus, both of them are counterfactually responsible for the death of the fish.

\subsubsection*{Responsibility Gap}

Next, assume that each of the two factories, A and B, has accumulated 20kg of the pollutant. Thus, the fish dies if either of the factories dumps the pollutant. In this situation, the deontic constraint is $\gamma=\neg(p_a\vee p_b)$. In our previous examples, it is not important if the factories make decisions simultaneously (independently) or consequently. The situation is different in this example. If the decisions are made simultaneously, then neither of the two factories has a strategy that {\em guarantees} that the fish is alive. Thus, each time when either of the factories dumps the pollutant, the fish dies and neither of the factories is counterfactually responsible for the death. By an {\em action profile} we mean the pair $(p_a,p_b)$ that describes the actions of both factories. By the {\em responsibility gap} we mean the set of all profiles under which the deontic constraint $\gamma$ is violated and none of the agents is counterfactually responsible for this. In our example, the responsibility gap is 
$\{(1,0),(0,1),(1,1)\}$.  

The situation is different if the factories make their actions not simultaneously, but sequentially. Suppose that factory B makes its decision after factory A. Furthermore, we assume that factory B knows the choice made by factory A. Consider action profile $(0,1)$. Note that in this situation factory B knows that factory A decided not to dump the pollutant. Thus, it now has a strategy (``do not dump'') to save the fish. Nevertheless, under the profile $(0,1)$, factory B still dumps the pollutant and kills the fish. Hence, factory B is now counterfactually responsible for the death of the fish. It is easy to see that, even in the case of consecutive decisions, nobody is counterfactually responsible for the death under the profiles $(1,0)$ and $(1,1)$ because under these profiles neither of the factories has a strategy that {\em guarantees} that the fish stays alive. As a result, if the decisions are made consecutively, then the responsibility gap is the set
\begin{equation}\label{G1}
G=\{(1,0),(1,1)\}.    
\end{equation}
Note that, in the above example, switching from simultaneous to consecutive decision-making {\em reduces the responsibility gap} from $\{(1,0),(0,1),(1,1)\}$ to $\{(1,0),(1,1)\}$. This is not a coincidence. Consecutive decision-making provides additional information to the parties that decide later. This information might {\em enable} them to prevent a violation of the deontic constraint. If they do not use this ability and the constraint is violated, then they become counterfactually responsible. In many situations, the responsibility gap is an undesirable property of a decision-making mechanism. In such situations, the introduction of an order in which the decisions are made is one of the techniques used by mechanism designers to minimise the responsibility gap. For example, road traffic laws use ``yield'' or ``give way'' signs, the time of arrival to an intersection, and the ``priority to the right'' rule to establish the order in which drivers must make their decisions. This eliminates the responsibility gap and, in the case of an accident, ascribes the responsibility to a specific driver. {\em In the rest of this paper, we only consider decision-making mechanisms where the agents make the decisions in a consecutive order.}

\subsubsection*{Second-Order Responsibility}

As we have seen above, the introduction of the order in which agents act might shrink the responsibility gap, but not be able to eliminate it completely. Under any action profile that belongs to the gap, the deontic constraint is violated and nobody is counterfactually responsible for this. However, there might be an agent who is {\em counterfactually responsible for the gap}. We refer to this responsibility for the gap as {\em second-order responsibility} for the violation of the deontic constraint. In the case when both factories have accumulated 20kg of the pollutant, the gap is given by equation~\eqref{G1}. Note that under both profiles in this set, factory A is second-order responsible for the death of the fish because it has had an action (``do not dump'') that would guarantee that the resulting profile does not belong to set $G$. By not dumping the pollutant, factory A guarantees that someone (factory B in our case) will be counterfactually responsible if the fish dies. Intuitively, second-order responsibility is the responsibility for the fact that no one is accountable.

\subsubsection*{Second-Order Gap}

Let us now consider a setting where there is a third factory, C. Each of the three factories has accumulated 20kg of pollutant and it still takes only 15 kg of the pollutant to kill the fish. 
Thus, the deontic constraint $\gamma$ is $\neg(p_a\vee p_b\vee p_c)$. Suppose that the factories make their decisions in the order: A, B, C. If either factory A or B dumps the pollutant, then the fish dies and no agent has had a strategy that would guarantee that the fish stays alive. Hence, nobody is counterfactually responsible for the death. If only factory C dumps the pollutant, then it has had a strategy (``do not dump'') that would guarantee that the fish stays alive. Hence, in such a situation, factory C is counterfactually responsible for the death. In other words, the responsibility gap is
\begin{equation*}
G_1\!=\!\{(0,1,p_c)\!\mid\! p_c\in\{0,1\}\}
\cup
\{(1,p_b,p_c)\!\mid\! p_b,p_c\in\{0,1\}\}.
\end{equation*}
If the profile belongs to the set $\{(0,1,p_c)\mid p_c\in\{0,1\}\}$, then factory B is second-order responsible because it has a strategy (``do not dump'') that guarantees that the decision-making process will avoid gap $G_1$. This is because this strategy guarantees that if the fish is dead, then factory C is responsible for the death. At the same time, if the profile belongs to the set
\begin{equation}\label{15-sep-a}
G_2=\{(1,p_b,p_c)\mid p_b,p_c\in\{0,1\}\},    
\end{equation}
then no agent has a strategy to guarantee that gap $G_1$ is avoided. Thus, under each profile from set $G_2$, not only the fish is dead and nobody is counterfactually responsible for this, but nobody is second-order responsible for the death either. We say that set $G_2$ is the {\em second-order responsibility gap}. Informally, the second-order responsibility gap consists of all profiles under which the deontic constraint is violated and nobody is responsible for the fact that nobody is accountable for the violation.

As a less formal example, imagine a situation when a paper submitted to a conference has not been reviewed by the PC member it got assigned to. In this case, the PC member is responsible for the failure to review. However, if no PC member has been assigned to review the paper, then there is a responsibility gap and the PC Chair is second-order responsible for the failure to review the paper. If the PC Chair gets ill and nobody has been assigned to replace the Chair, then there is a second-order gap.

\subsubsection*{First Result}
  
Note that, in our three-factory example, factory A has a strategy (``do not dump'') that guarantees that gap $G_2$ is avoided, see equation~\eqref{15-sep-a}. Thus, under any profile that belongs to set $G_2$, factory A is {\em third-order responsible} for the death of the fish. One can also define a third-order gap as a set of all profiles in the second-order gap under which nobody is third-order responsible for a violation of the deontic constraint. In our example, the third-order gap is empty. In this paper, we consider $n$-order responsibility and $n$-order gaps. It is not a coincidence that, in our example with three agents, the third-order gap is empty. In Theorem~\ref{first result theorem}, we show that in consecutive decision-making mechanisms with $n$ agents, the $n$-order gap is always empty. In other words, {\bf if a deontic constraint is violated in an $n$-agent mechanism, then there exists $d\le n$ such that at least one agent is $d$-order responsible for the violation}. The notion of $n$-order responsibility gap in a more general ``extensive form game'' setting is introduced in~\cite{s24aaai}. Shi showed that in that more general setting, the $d$-order gap is empty if $d\ge N-1$, where $N$ is the number of leaf nodes in the extensive form game. When translated to our setting, this means that the gap of order $2^n-1$ is always empty. The reason why our result is so much stronger is that, in our setting, each agent acts just once. In an extensive form game, the same agent might act multiple times. \citet{s24aaai} is also using the term ``higher-order responsibility'' in a sense similar but more general than ours.  

\subsubsection*{Second Result}

Let us now consider a setting where it still takes 15kg of the pollutant to kill the fish, but factories A, B, and C have accumulated 20kg, 10kg, and 10kg of the pollutant, respectively. The deontic constraint is now $\gamma=\neg(p_a\vee(p_b\wedge p_c))$. In such a setting, the fish dies under the following set of action profiles:
$$
\{(1,p_b,p_c)\mid p_b,p_c\in \{0,1\}\}\cup\{(0,1,1)\}.
$$
Out of these profiles, only under the last profile, somebody (actually both, factory B and factory C) is counterfactually responsible for the death of the fish. Thus, the responsibility gap is
$$
G'_1=\{(1,p_b,p_c)\mid p_b,p_c\in \{0,1\}\}.
$$
Note that factory A has a strategy (``do not dump'') to avoid this gap. Thus, factory A is counterfactually responsible for the gap under each profile in set $G'_1$. This means that, in this setting, the second-order responsibility gap is empty. As we have seen in equation~\eqref{15-sep-a}, in other settings such a gap might be nonempty. Thus, in some settings, the $d$-order responsibility gap might be empty for $d$ strictly less than $n$. {\bf Our second result is the complexity analysis of deciding for any given mechanism if the $d$-order responsibility gap is empty. 
In Theorem~\ref{complete theorem}, we show that this problem  is $\Pi_{2d+1}$-complete.} 
\citet{s24aaai} discussed the algorithmic complexity of model-checking of an arbitrary formula that contains counterfactual responsibility modality. The absence of the $d$-order responsibility gap can be expressed by such a formula. Shi proved that model-checking can be done in polynomial time as a function of the number of leaf nodes in the tree representing the mechanism. In the case of the mechanisms discussed in the current paper, the number of leaf nodes is exponential in terms of the number of agents. Thus, when translated into our setting, her result only gives an exponential upper bound on the complexity of deciding if the $d$-order responsibility gap is empty.

\subsubsection*{Group Responsibility}

The main alternative to higher-order responsibility is {\em group responsibility} \cite{nt19aaai,l21pt,ygcsnj21ieee,ygsdjn21aamas,dy23as,ygsdjnr23ais}. As an example, consider our three-factory scenario above: each factory has accumulated 20kg of pollutant and it takes 15kg to kill the fish. Suppose that the first factory (A) does {\em not} dump the pollutant and the other two factories dump (first B, then C). The fish is killed. Note that neither of the three factories had an individual strategy that would guarantee saving the fish. Hence, none of them is counterfactually (first-order) responsible for killing the fish. At the same time, after factory A did not dump the pollutant, factories B and C had an opportunity to {\em coordinate} their actions and agree that neither of them would dump. This is a potential {\em group strategy} of B and C to guarantee that the fish is alive. Of course, the group consisting of all three factories also had a strategy to save the fish. The standard approach to define counterfactual group responsibility is to blame only the {\em minimal} (in terms of subset relation) group that could have prevented~\cite{nt20ai}. In our example, such a group consists of just B and C. Note that group responsibility diffuses the responsibility between B and C, potentially creating a ``circle of blame'' effect.
Note that this approach is different from Necessary Element of a Sufficient Set causality by events or by specific actions~\cite{w85clr}, rather than agents. 

At the same time, in this example, only factory B is individually second-order responsible for the death of the fish. Indeed, if factory B refrains from dumping the pollutant, it will prevent the first-order responsibility gap (if C dumps, C will be counterfactually responsible). As we see from this example, higher-order responsibility avoids the diffusion associated with group responsibility and promotes individual accountability for the outcome.

\subsubsection*{``Responsibility Gap'' in the Literature}

The responsibility gap is also often referred to as ``responsibility void''. These terms are consistently used in the literature to refer to a lack of a responsible agent in many different contexts. An important issue in AI ethics is whether autonomous systems should be treated as having ``moral agency''. If such systems are denied the moral agency status, then there is a commonly acknowledged responsibility gap for the decisions taken by the systems~\cite{m04eit,ct15pt,bhlmmp20ai,c20see,g20eit,sm21pt,t21pt,k22eit,o23pt,hv23synthese}. The same terms are also used in group responsibility discussions when the group is responsible for an outcome but each individual agent is not~\cite{l21pt,ygcsnj21ieee,ygsdjn21aamas,dy23as,ygsdjnr23ais}. Finally, outside moral agency and group responsibility contexts, these terms are used in decision-making mechanism analysis to refer to the outcomes that no agent is individually responsible for~\cite{bh11pq,d18pss,bh18ej,dp22scw,d22}. Our work falls into the third category.

The responsibility gap has been studied in detail for ``discursive dilemma''~\cite{l06ethics} -- a group decision-making mechanism where multiple agents are asked to independently rank available alternatives (say, job candidates) based on several required criteria. There might be a situation where each agent finds the alternative unacceptable due to one of the criteria, but the majority finds that the alternative satisfies all of the criteria. Ascribing individual responsibility for the group choice of the alternative in such a situation is problematic. \cite{bh18ej,dp22scw} studied responsibility gaps in ``discursive dilemma''-like settings. In such a setting, it is almost impossible for an agent to have a strategy that {\em guarantees} that an alternative is avoided. So, in both of these works Frankfurt's ``\dots\ could have done otherwise'' is interpreted as not taking an action which is the {\em most effective} in preventing the outcome. Braham and van Hees show that if a ``discursive dilemma''-like setting has no responsibility gap and does not have what the authors call ``fragmentation'' of responsibility, then the mechanism must be a dictatorship, where a single agent determines the group decision~\cite{bh18ej}. \cite{dp22scw} gives necessary and sufficient conditions for a responsibility gap to exist in a ``discursive dilemma''-like setting.

\section{Formal Setting}

In this section, we describe the formal setting in which we state and prove our technical results. We assume some fixed infinite countable set of propositional variables. By a Boolean formula, we mean any expression built out of propositional variables using conjunction $\wedge$, disjunction $\vee$, and negation $\neg$. By a Boolean formula with quantifiers, we mean a formula that, in addition, can use universal $\forall$ and existential $\exists$ quantifiers over propositional variables. If $\mathbf{v}$ is an ordered set of propositional variables $v_1,\dots,v_k$, then by $\forall\mathbf{v}\phi$ and  $\exists\mathbf{v}\phi$ we mean the formulae $\forall v_1 \dots\forall v_k\phi$ and  $\exists v_1 \dots\exists v_k\phi$, respectively.

\begin{definition}\label{mechanism definition}
A (sequential decision-making) mechanism is a triple $(n,\mathbf{v},\gamma)$, where 
\begin{enumerate}
    \item integer $n\ge 0$ is the number of ``agents'',
    \item $\mathbf{v}=\{\mathbf{v}_i\}_{1\le i\le n}$ is a family of disjoint ordered sets of propositional variables; by $\cup \mathbf{v}$ we denote the unordered union of these sets,
    \item ``deontic constraint'' $\gamma$ is a Boolean formula (without quantifiers) whose variables belong to the set $\cup \mathbf{v}$.
\end{enumerate}
\end{definition}

In our introductory examples, agents were factories A, B, and C. In the formal setting of Definition~\ref{mechanism definition}, agents are numbers 1, \dots, $n$. In those examples, we have assumed that the action of each agent $i$ is represented by a single propositional variable $p_i$. In the more general setting of Definition~\ref{mechanism definition}, an action of each agent $i$ is represented by an ordered set $\mathbf{v}_i$ of Boolean variables. To capture our introductory examples, we assume that $\mathbf{v}_1=\{p_a\}$, $\mathbf{v}_2=\{p_b\}$, and $\mathbf{v}_3=\{p_c\}$.

Next, we will express the $d$-order responsibility of an agent $i$ for the violation of the deontic constraint as a Boolean formula. Let us start with our introductory example where each of the three factories has accumulated 20kg of the pollutant. In this situation, the deontic constraint is
$\gamma=\neg (p_a\vee p_b\vee p_c)$.
Because factory A makes the decision first, for it to be 
 (counterfactually) responsible, (i) the constraint should be violated and (ii) factory A should have an action that guarantees the constraint no matter what the actions of the other agents are. These two conditions can be expressed as
\begin{enumerate}
    \item $p_a\vee p_b\vee p_c$, where  $p_a$, $p_b$, and $p_c$ are unbounded variables representing the actual actions,
    \item $\exists p_a\forall p_b\forall p_c \neg(p_a\vee p_b\vee p_c)$, where $p_a$, $p_b$, and $p_c$ are bounded variables ranging over all possible actions.
\end{enumerate}
Thus, the responsibility of factory A under action profile $(p_a,p_b,p_c)$ can be expressed
by the formula:
\begin{equation}\label{Ra definition}
\R_a=(p_a\vee p_b\vee p_c)\wedge \exists p_a\forall p_b\forall p_c \neg(p_a\vee p_b\vee p_c).    
\end{equation}
Note that it is common in mathematics to avoid using the same variable names for bounded and unbounded variables. To do this, one can re-write formula $\R_a$ in an equivalent form. For example, as 
$(p_a\vee p_b\vee p_c)\wedge \exists x\forall y\forall z \neg(x\vee y\vee z)$. In this paper, we decided to define formula $\R_a$ as given by equation~\eqref{Ra definition} because it makes it significantly simpler to express $d$-order responsibility later. It is important to remember, however, that there is no connection between identically named variables inside and outside of the scope of a corresponding quantifier.

Formula $\R_a$ expresses (counterfactual) responsibility of factory A in the sense that, for any action profile, if the values of unbounded variables $p_a$, $p_b$, and $p_c$ are chosen as specified by the action profile, then the above formula has value 1 if and only if factory A is responsible for the violation of the deontic constraint $\gamma$. 

Recall, that factory B makes its decision after factory A. Thus, for factory B to be (counterfactually) responsible, (i) the deontic constraint should be violated and (ii) factory B should have an action that guarantees the constraint {\bf under the current action of factory A} and any possible action of factory C. This can be expressed as:
\begin{equation}\label{Rb definition}
\R_b=(p_a\vee p_b\vee p_c)\wedge \exists p_b\forall p_c \neg(p_a\vee p_b\vee p_c).    
\end{equation}
Note that both occurrences of variable $p_a$ in the above formula are unbounded. Thus, they both refer to the actual action of factory A. Similarly, the counterfactual responsibility of factory C is expressed by the formula
\begin{equation}\label{Rc definition}
\R_c=(p_a\vee p_b\vee p_c)\wedge \exists p_c \neg(p_a\vee p_b\vee p_c).    
\end{equation}
More generally, under an arbitrary decision-making mechanism $(n,\mathbf{v},\gamma)$, the (counterfactual) responsibility of agent $i\le n$ is expressible by the formula:
\begin{equation}\label{24-sep-a}
\R_i=\neg \gamma \wedge \exists\mathbf{v}_i\forall\mathbf{v}_{i+1}\dots\forall\mathbf{v_n}\gamma.    
\end{equation}
Then, the responsibility gap can be expressed by the formula
$$\G=\neg\gamma \wedge \bigwedge_{j\le n}\neg\R_j.$$
The above formula expressed the gap in the sense that if unbounded occurrences of variables from ordered sets $\mathbf{v}_1,\dots, \mathbf{v}_n$ represent the actions of the agents under some profile, then this formula has Boolean value 1 if and only if the profile belongs to the gap. Recall that the second-order responsibility for the violation of the deontic constraint is the counterfactual responsibility for the gap. Thus, it can be expressed by the formula
\begin{multline*}
\R^2_i= \left(\neg\gamma \wedge \bigwedge_{j\le n}\neg\R_j\right)\\
\wedge
\exists\mathbf{v}_i\forall\mathbf{v}_{i+1}\dots\forall\mathbf{v}_n
\neg \left(\neg\gamma \wedge \bigwedge_{j\le n}\neg\R_j\right).
\end{multline*}
The third-order responsibility is expressible by the formula
\begin{multline*}
\R^3_i= \left(\neg\gamma \wedge \bigwedge_{j\le n}\neg\R_j\wedge \bigwedge_{j\le n}\neg\R^2_j\right)\wedge\\
\exists\mathbf{v}_i\forall\mathbf{v}_{i+1}\dots\forall\mathbf{v}_n
\neg \left(\neg\gamma \wedge \bigwedge_{j\le n}\neg\R_j\wedge \bigwedge_{j\le n}\neg\R^2_j\right).
\end{multline*}
In general, for any order $d>0$, responsibility of agent $i$ of order $d$ is expressible by the formula
\begin{align}
\R^d_i&= \left(\neg\gamma \wedge \bigwedge_{j\le n}\neg\R_j\wedge \bigwedge_{j\le n}\neg\R^2_j\wedge \dots \wedge \bigwedge_{j\le n}\neg\R^{d-1}_j\right)\wedge\nonumber\\
&\exists\mathbf{v}_i\forall\mathbf{v}_{i+1}\dots\forall\mathbf{v}_n \nonumber \\
&\hspace{-0mm}\neg\left(\!\neg\gamma \wedge \bigwedge_{j\le n}\!\neg\R_j\wedge \bigwedge_{j\le n}\!\neg\R^2_j\wedge \dots\wedge \bigwedge_{j\le n}\neg\R^{d-1}_j\!\right).\label{R definition}
\end{align}
Recall that $\mathbf{v}=\{\mathbf{v}_i\}_{i\le n}$ is a family of disjoint ordered sets of propositional variables. By an {\em action profile} we mean any family $\mathbf{s}=\{\mathbf{s}_i\}_{i\le n}$ of ordered sets of Boolean values (0 or 1) such that $|\mathbf{s}_i|=|\mathbf{v}_i|$ for each agent $i\le n$. For any formula $\phi$ whose unbounded propositional variables belong to the set $\cup \mathbf{v}$, let $\phi[\mathbf{s}/\mathbf{v}]$ be the Boolean value (0 or 1) of formula $\phi$ under the valuation that assigns each variable from set $\mathbf{v}_i$ the corresponding Boolean value from set $\mathbf{s}_i$.

\begin{definition}\label{d-order responsibility}
Under an action profile $\mathbf{s}$ of a mechanism $(n,\mathbf{v},\gamma)$, an agent $i\le n$ is $d$-order responsible (for the violation of the deontic constraint)  if $\R^d_i[\mathbf{s}/\mathbf{v}]=1$.   
\end{definition}

% \begin{definition}\label{Gfd definition}
% For any $d\ge 1$, set $\GF^{d}$ contains all sequential decision-making mechanisms $(n,\mathbf{v},\gamma)$ 
% such that 
% $\forall\mathbf{v}_1\dots\forall\mathbf{v}_n \left(\gamma\vee\bigvee_{d'\le d}\bigvee_{i\le n}\R^{d'}_{i}\right)\equiv \T$. 
% \end{definition}

\begin{definition}\label{d-gap-free}
A mechanism is $d$-gap-free if, for each action profile that does not satisfy the deontic constraint, there is $d'\le d$ and an agent $i\le n$ who is $d'$-order responsible under the action profile.  
\end{definition}

The set of all $d$-gap-free mechanisms is denoted by $\GF^{d}$.

% \begin{definition}\label{Gfd definition}
% For any $d\ge 1$, set $\GF^{d}$ contains all mechanisms $(n,\mathbf{v},\gamma)$ 
% such that 
% $\forall\mathbf{v}_1\dots\forall\mathbf{v}_n \left(\gamma\vee\bigvee_{d'\le d}\bigvee_{i\le n}\R^{d'}_{i}\right)\equiv \T$. 
% \end{definition}

\begin{theorem}\label{gap-free monotonicity}
$\GF^{d_1}\subseteq \GF^{d_2}$ for each $d_1\le d_2$.    
\end{theorem}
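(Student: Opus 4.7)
The plan is to unfold Definition~\ref{d-gap-free} directly; no nontrivial reasoning about the formulas $\R^d_i$ is required. I would take an arbitrary mechanism $(n,\mathbf{v},\gamma) \in \GF^{d_1}$ and an arbitrary action profile $\mathbf{s}$ that violates $\gamma$, and then exhibit the required witness for membership in $\GF^{d_2}$.

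By the hypothesis that the mechanism is in $\GF^{d_1}$, there exist an integer $d' \le d_1$ and an agent $i \le n$ such that $\R^{d'}_i[\mathbf{s}/\mathbf{v}] = 1$, i.e.\ agent $i$ is $d'$-order responsible under $\mathbf{s}$. Since $d_1 \le d_2$ by assumption, transitivity of $\le$ gives $d' \le d_2$. Hence the same pair $(d', i)$ already witnesses that the condition in Definition~\ref{d-gap-free} is met for $d_2$. As $\mathbf{s}$ was an arbitrary profile violating $\gamma$, this yields $(n,\mathbf{v},\gamma) \in \GF^{d_2}$.

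There is no real obstacle here: the statement is essentially a monotonicity observation baked into the definition, where enlarging the allowed range of the order $d'$ cannot remove any previously found witness. The only thing worth being careful about is not to confuse this with a stronger (and false in general) claim that the same $d'$-order responsibility witness works uniformly across all profiles; the witness is allowed to depend on $\mathbf{s}$, and that is exactly what the definition provides.
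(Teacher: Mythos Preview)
Your proof is correct and follows exactly the same approach as the paper: both simply unfold Definition~\ref{d-gap-free} and use $d'\le d_1\le d_2$ to conclude. The paper's version is a one-sentence summary of what you wrote out in detail.
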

\begin{proof}
By Definition~\ref{d-gap-free} and the assumption $d_1\le d_2$, any $d_1$-gap-free mechanism is a $d_2$-gap-free mechanism. Therefore, $\GF^{d_1}\subseteq \GF^{d_2}$.  
\end{proof}

\section{First Result}

In this section, we prove our first result that any non-trivial mechanism with $n$ agents is $n$-gap-free. This result is stated as Theorem~\ref{first result theorem} at the end of this section. The proof of the next lemma is in the appendix.

\begin{lemma}\label{first result lemma}
For any mechanism $(n,\mathbf{v},\gamma)$, any nonnegative $k\le n$, any action profile $\mathbf{s}$   such that $\gamma[\mathbf{s}/\mathbf{v}]=1$, and any action profile $\mathbf{s}'$ such that
$\mathbf{s}_i=\mathbf{s}'_i$ for each $i\le k$, either $\gamma[\mathbf{s}'/\mathbf{v}]=1$ or there is positive $d\le n-k$ and an agent $i\le n$ who is $d$-order responsible under action profile $\mathbf{s}'$.  
\end{lemma}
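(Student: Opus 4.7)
I would prove this by downward induction on $k$, i.e.\ by induction on $n-k$.

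\textbf{Base case} ($k = n$): The hypothesis $\mathbf{s}_i = \mathbf{s}'_i$ for all $i \le n$ forces $\mathbf{s}' = \mathbf{s}$, so $\gamma[\mathbf{s}'/\mathbf{v}] = \gamma[\mathbf{s}/\mathbf{v}] = 1$, giving the first disjunct.

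\textbf{Inductive step} (from $k+1$ to $k$, for $k < n$): Fix $\mathbf{s}$ and $\mathbf{s}'$ as in the hypotheses. If $\gamma[\mathbf{s}'/\mathbf{v}] = 1$, we are done, so assume $\gamma[\mathbf{s}'/\mathbf{v}] = 0$. If some agent $j$ is $d'$-order responsible under $\mathbf{s}'$ for some positive $d' \le n-k-1$, setting $d := d'$ finishes the case. Otherwise, I claim that agent $k+1$ is $(n-k)$-order responsible under $\mathbf{s}'$, which I would verify by inspecting formula~\eqref{R definition} with $i=k+1$ and $d=n-k$. The first conjunct of $\R^{n-k}_{k+1}$ asserts $\neg\gamma$ together with the absence of any $d''$-order responsibility for $1 \le d'' < n-k$; this is exactly the combination of the standing assumption $\gamma[\mathbf{s}'/\mathbf{v}]=0$ and the ``otherwise'' branch. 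For the second conjunct, I would use the witness $\mathbf{v}_{k+1} := \mathbf{s}_{k+1}$: for any completion $(\mathbf{t}_{k+2}, \ldots, \mathbf{t}_n)$, the profile $\mathbf{s}'' := (\mathbf{s}'_1, \ldots, \mathbf{s}'_k, \mathbf{s}_{k+1}, \mathbf{t}_{k+2}, \ldots, \mathbf{t}_n)$ agrees with $\mathbf{s}$ on its first $k+1$ components, so the induction hypothesis (applied at level $k+1$) yields that either $\gamma[\mathbf{s}''/\mathbf{v}] = 1$ or some agent is $d''$-order responsible under $\mathbf{s}''$ for some $1 \le d'' \le n-k-1$. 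In either case the ``gap'' condition inside the $\neg(\cdot)$ in~\eqref{R definition} fails at $\mathbf{s}''$, delivering the required witness for $\exists \mathbf{v}_{k+1}\forall\mathbf{v}_{k+2}\cdots\forall\mathbf{v}_n$.

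\textbf{Main obstacle.} The chief subtlety is semantic bookkeeping for the nested quantifiers in~\eqref{R definition}. When $\R^{d''}_j$ is evaluated inside the scope of $\exists\mathbf{v}_{k+1}\forall\mathbf{v}_{k+2}\cdots\forall\mathbf{v}_n$, the free occurrences of $\mathbf{v}_1,\ldots,\mathbf{v}_k$ still take their values from $\mathbf{s}'$ while the free occurrences of $\mathbf{v}_{k+1},\ldots,\mathbf{v}_n$ take the values given by the outer quantifier; by the ``no connection'' convention emphasised right after~\eqref{Ra definition}, the internal quantifiers of $\R^{d''}_j$ rebind their own variables independently. I would have to argue carefully that this compositional evaluation is literally the same as evaluating $\R^{d''}_j$ at the profile $\mathbf{s}''$ in the sense of Definition~\ref{d-order responsibility}, since this is the bridge that lets me quote the induction hypothesis ``pointwise'' inside the $\forall\mathbf{v}_{k+2}\cdots\forall\mathbf{v}_n$ block. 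Once that bookkeeping is in place, the rest of the argument is purely structural.
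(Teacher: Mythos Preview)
Your proposal is correct and follows essentially the same backward induction on $k$ as the paper, with the same witness $\mathbf{s}_{k+1}$ for the existential and the same appeal to the induction hypothesis at level $k+1$. The only cosmetic difference is that the paper first establishes the existential conjunct of $\R^{n-k}_{k+1}$ at $\mathbf{s}'$ and then case-splits on whether $\R^{n-k}_{k+1}[\mathbf{s}'/\mathbf{v}]=1$, whereas you case-split on the first conjunct up front and then build the existential witness; the logical content is identical, and your explicit remark about the variable-binding bookkeeping in fact makes precise a point the paper leaves implicit.
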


\begin{theorem}\label{first result theorem}
If Boolean formula $\gamma$ is satisfiable, then $(n,\mathbf{v},\gamma)\in \GF^n$.    
\end{theorem}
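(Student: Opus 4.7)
The plan is to derive the theorem as an immediate specialization of Lemma~\ref{first result lemma} at the parameter value $k=0$. The lemma has been crafted so that its statement degenerates, when $k=0$, into precisely the $n$-gap-free condition of Definition~\ref{d-gap-free}, provided we have at least one witness profile $\mathbf{s}$ satisfying $\gamma[\mathbf{s}/\mathbf{v}]=1$. Satisfiability of $\gamma$ is exactly what supplies such an $\mathbf{s}$.

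Concretely, I would proceed as follows. First, since $\gamma$ is assumed satisfiable and its variables lie in $\cup\mathbf{v}$, I pick any Boolean valuation that satisfies $\gamma$ and package it as an action profile $\mathbf{s}=\{\mathbf{s}_i\}_{i\le n}$ with $\gamma[\mathbf{s}/\mathbf{v}]=1$. Second, I apply Lemma~\ref{first result lemma} with this $\mathbf{s}$ and with $k=0$. The hypothesis ``$\mathbf{s}_i=\mathbf{s}'_i$ for each $i\le k$'' is vacuous when $k=0$, so the lemma applies to every action profile $\mathbf{s}'$ whatsoever. Its conclusion, specialized to $k=0$, states that for every action profile $\mathbf{s}'$, either $\gamma[\mathbf{s}'/\mathbf{v}]=1$, or there exist a positive integer $d\le n-0=n$ and an agent $i\le n$ who is $d$-order responsible under $\mathbf{s}'$.

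Third, I translate this directly into the definition of $\GF^n$. Fix any action profile $\mathbf{s}'$ that does not satisfy the deontic constraint, i.e.\ $\gamma[\mathbf{s}'/\mathbf{v}]=0$. The first disjunct of the lemma's conclusion is then ruled out, so the second must hold: there is some positive $d'\le n$ and some agent $i\le n$ with $\R^{d'}_i[\mathbf{s}'/\mathbf{v}]=1$, i.e.\ agent $i$ is $d'$-order responsible under $\mathbf{s}'$. This is exactly what Definition~\ref{d-gap-free} requires for the mechanism $(n,\mathbf{v},\gamma)$ to lie in $\GF^n$, completing the proof.

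There is no real obstacle to overcome here: all the induction is absorbed into Lemma~\ref{first result lemma}, and the role of the theorem is merely to initialize that induction using a satisfying assignment for $\gamma$. The only point worth flagging is the observation that the lemma's quantifier ``positive $d\le n-k$'' is nonvacuous precisely because $n-0=n\ge 1$ whenever the mechanism has at least one agent; in the degenerate case $n=0$, the hypothesis that $\gamma$ is satisfiable forces $\gamma$ to be a tautology (there are no variables), so no violating profile exists and membership in $\GF^0$ holds vacuously.
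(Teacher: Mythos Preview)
Your proposal is correct and follows essentially the same route as the paper: obtain a satisfying profile $\mathbf{s}$ from the satisfiability of $\gamma$, then invoke Lemma~\ref{first result lemma} with $k=0$ (where the agreement condition on $\mathbf{s}$ and $\mathbf{s}'$ is vacuous) to conclude that every violating profile $\mathbf{s}'$ admits a $d'$-order responsible agent with $d'\le n$. Your extra remark about the degenerate case $n=0$ is a harmless clarification not present in the paper.
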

\begin{proof}
The assumption that the Boolean formula $\gamma$ is satisfiable implies that there is an action profile $\mathbf{s}$ such that $\gamma[\mathbf{s}/\mathbf{v}]=1$.  Consider an action profile $\mathbf{s}'$ such that $\gamma[\mathbf{s}'/\mathbf{v}]=0$. By Definition~\ref{d-gap-free}, it suffices to show that there is $d\le n$ and an agent $i\le n$ who is $d$-order responsible under action profile $\mathbf{s}'$. The last statement follows from Lemma~\ref{first result lemma} in the case $k=0$.
\end{proof}
Together with Theorem~\ref{gap-free monotonicity}, the above result implies that if $\gamma$ is satisfiable, then $(n,\mathbf{v},\gamma)\in \GF^d$ for each $d\ge n$.

\section{Second Result}

In this section, we prove that set $\GF^d$ is $\Pi_{2d+1}$-complete. The result is stated as Theorem~\ref{complete theorem} at the end of the section.

\begin{lemma}\label{in Pi 2d+1}
For each $d\ge 1$, set ${\sf GF}^d$ belongs to the class $\Pi_{2d+1}$.   
\end{lemma}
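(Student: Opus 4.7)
The plan is to convert the formulas $\R^d_i$ defined by equation~\eqref{R definition} into prenex QBF form and then carefully track the quantifier alternations. Specifically, I will prove by induction on $d$ that, for each agent $i\le n$, the formula $\R^d_i$ is equivalent to a $\Sigma_{2d}$ QBF whose free variables lie in $\cup \mathbf{v}$. Once this claim is established, membership of $\GF^d$ in $\Pi_{2d+1}$ will follow by prepending a single $\forall$-block over all action profiles.

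For the base case $d=1$, equation~\eqref{24-sep-a} gives directly that $\R_i = \neg \gamma \wedge \exists \mathbf{v}_i \forall \mathbf{v}_{i+1} \cdots \forall \mathbf{v}_n\, \gamma$, which is a $\Sigma_2$ formula (conjoining with the quantifier-free $\neg\gamma$ does not raise the class). For the inductive step, I will isolate the subformula
\[
\psi_d \;=\; \neg\gamma \wedge \bigwedge_{d'<d}\bigwedge_{j\le n}\neg \R^{d'}_j
\]
that occurs twice in the definition of $\R^d_i$. By the induction hypothesis each $\neg \R^{d'}_j$ lies in $\Pi_{2d'}$; standard closure of $\Pi_k$ under finite conjunction (after renaming bound variables to avoid collisions) together with the inclusion $\Pi_k\subseteq\Pi_{k+1}$ yields $\psi_d\in\Pi_{2d-2}$, and hence $\neg\psi_d\in\Sigma_{2d-2}$. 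Prepending the prefix $\exists \mathbf{v}_i \forall \mathbf{v}_{i+1}\cdots\forall\mathbf{v}_n$ to $\neg\psi_d$ adds exactly two new alternations, because $\neg\psi_d$ begins, in prenex form, with an $\exists$-block; the result is thus $\Sigma_{2d}$. Conjoining with $\psi_d\in\Pi_{2d-2}\subseteq\Sigma_{2d}$ keeps the whole formula in $\Sigma_{2d}$, completing the induction.

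To finish, I will rewrite Definition~\ref{d-gap-free} as: $(n,\mathbf{v},\gamma)\in\GF^d$ iff the sentence
\[
\forall \mathbf{v}_1\cdots\forall\mathbf{v}_n\Bigl(\gamma \vee \bigvee_{d'\le d}\bigvee_{i\le n}\R^{d'}_i\Bigr)
\]
is true. Its disjunctive body lies in $\Sigma_{2d}$ by the inductive claim together with the inclusion $\Sigma_{2d'}\subseteq\Sigma_{2d}$ for $d'\le d$. Prepending the outer universal block contributes exactly one further alternation (again because the body starts with $\exists$), landing the sentence in $\Pi_{2d+1}$. For any fixed $d$, a straightforward size estimate shows that the encoding is of size polynomial in $|\gamma|$ and $n$, providing the polynomial-time reduction witnessing $\GF^d\in\Pi_{2d+1}$.

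The main delicate point is the alternation bookkeeping inside the induction: at every level I will need to verify that the inner formula really starts with the expected quantifier type, so that prepending $\exists \mathbf{v}_i\forall\mathbf{v}_{i+1}\cdots\forall\mathbf{v}_n$ contributes precisely two new alternations (rather than one or three) and the outer universal block contributes exactly one. I do not expect any further obstacle, as the remaining steps are standard closure properties of the polynomial hierarchy.
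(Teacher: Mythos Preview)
Your proposal is correct and follows essentially the same approach as the paper's proof: an induction showing each $\R^{d'}_i$ is $\Sigma_{2d'}$, followed by the observation that the $d$-gap-free condition is expressed by the closed formula $\forall\mathbf{v}_1\cdots\forall\mathbf{v}_n\bigl(\gamma\vee\bigvee_{d'\le d}\bigvee_{j\le n}\R^{d'}_j\bigr)$, which then lands in $\Pi_{2d+1}$. Your version is simply more explicit about the alternation bookkeeping than the paper's terse induction step.
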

\begin{proof} First, observe that the following claim holds.
\begin{claim}\label{24-sep-b}
For each $d\ge 1$ and each $i\le n$, formula $\R^d_i$ belongs to class $\Sigma_{2d}$.   
\end{claim}
\begin{proof-of-claim}
We prove the claim by induction on $d$. 
If $d=1$, then $\R^d_i$ is the formula 
$\neg \gamma \wedge \exists\mathbf{v}_i\forall\mathbf{v}_{i+1}\dots\forall\mathbf{v_n}\gamma$
by equation~\eqref{R definition} (or equation~\eqref{24-sep-a}).
Note that $\gamma$ is a Boolean formula without quantifiers by item~3 of Definition~\ref{mechanism definition}.
Hence, formula $\R^d_i$ belongs to class $\Sigma_2$. 
The induction step follows from equation~\eqref{R definition} and the induction hypothesis.
\end{proof-of-claim}
By Definition~\ref{d-gap-free} and Definition~\ref{d-order responsibility}, a mechanism $(n,\mathbf{v},\gamma)$ is $d$-gap-free if and only if the following closed formula is a tautology: 
$$\forall\mathbf{v}_1\dots\forall\mathbf{v}_n \left(\gamma\vee\bigvee_{d'\le d}\bigvee_{j\le n}\R^{d'}_{j}\right).$$
By Claim~\ref{24-sep-b}, this formula belongs to class $\Pi_{2d+1}$, which implies the statement of the lemma.
\end{proof}

Let us now prove that the set $\GF^d$ is $\Pi_{2d+1}$-complete by defining a polynomial reduction of $\Pi_{2d+1}$ problems to $\GF^d$. Towards this goal, for each Boolean formula, we specify a decision-making mechanism.

\begin{definition}\label{canonical mechanism}
For any $d\ge 0$, any Boolean formula $\phi$ without quantifiers, and any family of disjoint ordered sets $\mathbf{x}_1,\dots,\mathbf{x}_{2d+1}$ of propositional variables whose union is equal to the set of all propositional variables in formula $\phi$, mechanism $M(\phi,\mathbf{x}_1,\dots,\mathbf{x}_{2d+1})=(2d+1,\mathbf{v},\gamma)$ is defined as follows:
\begin{enumerate}
    \item $\mathbf{v}_{2i+1}=\mathbf{x}_{2i+1}$ for each $i$ such that $0\le i\le d$,
    \item $\mathbf{v}_{2i}$ is equal to the concatenation of a new single variable $q_{2i}$ to the end of the ordered list of variables $\mathbf{x}_{2i}$, for each $i$ such that $1\le i\le d$,
    \item $\gamma=\phi\wedge\bigwedge_{1\le i\le d}q_{2i}$.
\end{enumerate}
\end{definition}
As an example, suppose that $d=1$, formula $\phi$ is the formula $x_1\vee x_2\vee x_3$, $\mathbf{x}_1=\{x_1\}$, $\mathbf{x}_2=\{x_2\}$, and $\mathbf{x}_3=\{x_3\}$. Then, mechanism $M(\phi,\mathbf{x}_1,\mathbf{x}_2,\mathbf{x}_{3})$ has three agents (called agent 1, agent 2, and agent 3). The action of agent 1 is defined by the singleton set of variables  $\mathbf{v}_1=\mathbf{x}_1=\{x_1\}$. The action of agent 2 is defined by the ordered set  $\mathbf{v}_2=\{x_2,q_2\}$, where $q_2$ is a new propositional variable that does not appear among variables $x_1$, $x_2$, and $x_3$. The action of agent 3 is defined by $\mathbf{v}_3=\mathbf{x}_3=\{x_3\}$. The deontic constraint $\gamma$ is $(x_1\vee x_2\vee x_3)\wedge q_2$.

The connection between formula $\phi$ and the decision-making mechanism $M(\phi,\mathbf{x}_1,\dots,\mathbf{x}_{2d+1})$ is stated in Lemma~\ref{left to right lemma} and Lemma~\ref{right to left lemma} below. To prove both of these lemmas, we introduce a {\em two-player extensive form game} between Devil and Moralist. In the game, Devil is in charge of odd-numbered sets $\mathbf{v}_1,\mathbf{v}_3,\mathbf{v}_5,\dots,\mathbf{v}_{2d+1}$ and Moralist is in charge of even-numbered sets $\mathbf{v}_2,\mathbf{v}_4,\dots,\mathbf{v}_{2d}$. The game consists of Devil and Moralist consequently choosing the values of variables in ordered sets they control: first, Devil chooses the values of all variables in $\mathbf{v}_1$, then Moralist chooses the values of all variables in $\mathbf{v}_2$, then Devil chooses $\mathbf{v}_3$, then Moralist chooses $\mathbf{v}_4$, and so on. Devil always concludes the game by choosing the values of all variables in set $\mathbf{v}_{2d+1}$. Once the game is over, they have specified an action profile of the mechanism  $M(\phi,\mathbf{x}_1,\dots,\mathbf{x}_{2d+1})$. 
The Devil's objective is to violate the deontic constraint $\gamma$ on the action profile they specified. The objective of Moralist is to satisfy~$\gamma$.

Figure~\ref{tree figure} shows the game between Devil and Moralist for the example we discussed above. The game starts at a root node $\epsilon$ with none of the variables specified. Then, Devil can choose the value of $x_1$ to be either 0 (go left) or 1 (go right). Then, Moralist chooses the values of $x_2$ and $q_2$. This corresponds to a transition from node $x_1$ to node $x_1x_2q_2$ in the tree. Finally, Devil picks the value of $x_3$, which completely specifies the action profile $x_1x_2q_2x_3$ of the decision-making mechanism. Note that, to keep the drawing clean, in Figure~\ref{tree figure} we write $x_1x_2q_2x_3$ instead of more accurate $(x_1,(x_2,q_2),x_3)$. In the figure, we labelled with $\gamma$ the leaf nodes whose action profiles satisfy the formula $\gamma=(x_1\vee x_2\vee x_3)\wedge q_2$. These are the outcomes of the game in which Moralist prevails.

It is easy to see that in our example in Figure~\ref{tree figure}, Moralist has a {\em strategy} to guarantee $\gamma$. By Win(Moralist) we denote the set of all (not necessarily leaf)  nodes in which Moralist has a {\em strategy} to guarantee the deontic constraint. For the example depicted in Figure~\ref{tree figure},
\begin{multline*}
\text{Win(Moralist)}=
\{\epsilon, 0, 1, 011, 101, 111,0011,\\0110,0111,1010,1011,1110, 1111\}.    
\end{multline*}

\begin{figure*}%[ht]
\begin{center}
\scalebox{.5}{\includegraphics{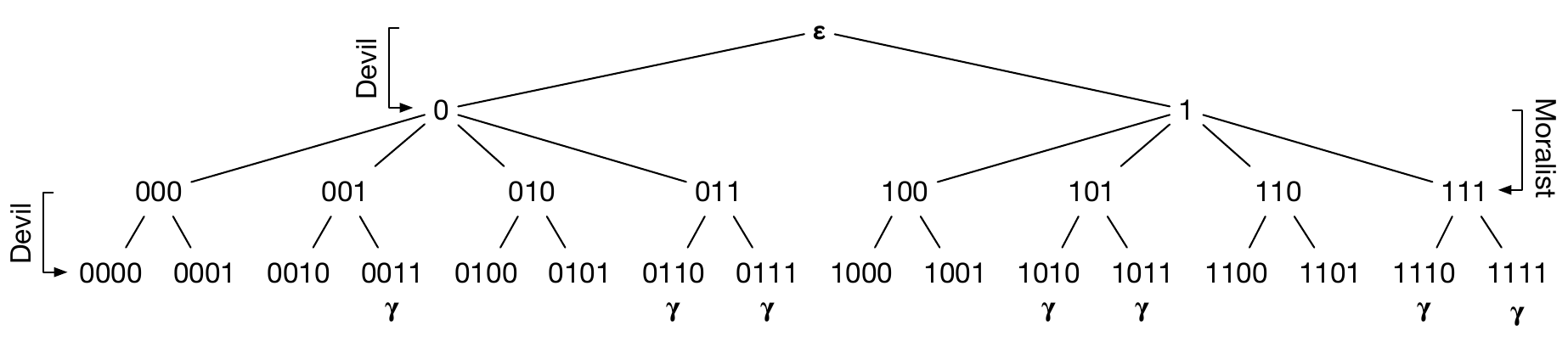}}
\caption{Extensive form game between Devil and Moralist.}\label{tree figure}
\end{center}
\end{figure*}

In Lemma~\ref{lemma D}, we establish a connection between the responsibility of agents under the profiles of mechanism $M(\phi,\mathbf{x}_1,\dots,\mathbf{x}_{2d+1})$ and existence of winning strategies in the two-player game between Devil and Moralist. In preparation for proving Lemma~\ref{lemma D}, below we state three small observations about winning strategies in the game.

\begin{lemma}\label{lemma A}
If $(\mathbf{s}_1,\dots, \mathbf{s}_{2k})\in \text{\em Win(Moralist)}$, then $(\mathbf{s}_1,\dots, \mathbf{s}_{2k},\mathbf{s}_{2k+1})\in \text{\em Win(Moralist)}$. 
\end{lemma}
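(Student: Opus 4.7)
The plan is to unfold the definition of Win(Moralist) at the relevant node and observe that, since Devil is the one to move there, whatever Devil chooses must already be compatible with Moralist continuing to win.

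First, I would note the parity bookkeeping: a node $(\mathbf{s}_1,\dots,\mathbf{s}_{2k})$ lies at depth $2k$ of the game tree, and moves alternate starting with Devil at the root. Thus the next move at this node is the $(2k{+}1)$-th move, which is made by Devil (since $2k+1$ is odd). Consequently, \emph{membership of $(\mathbf{s}_1,\dots,\mathbf{s}_{2k})$ in Win(Moralist) is a statement about Devil's move:} Moralist has a strategy $\tau$ defined on the subtree rooted at $(\mathbf{s}_1,\dots,\mathbf{s}_{2k})$ such that for \emph{every} Devil's continuation starting with any choice of $\mathbf{s}_{2k+1}$, the play induced by $\tau$ reaches a leaf satisfying $\gamma$.

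Next, I would fix an arbitrary choice $\mathbf{s}_{2k+1}$ for Devil's next move and let $\tau'$ be the restriction of $\tau$ to the subtree rooted at $(\mathbf{s}_1,\dots,\mathbf{s}_{2k},\mathbf{s}_{2k+1})$. Because $\tau$ already prescribes winning replies against every possible Devil continuation beginning with $\mathbf{s}_{2k+1}$, the restriction $\tau'$ is by construction a winning strategy for Moralist from $(\mathbf{s}_1,\dots,\mathbf{s}_{2k},\mathbf{s}_{2k+1})$. Hence $(\mathbf{s}_1,\dots,\mathbf{s}_{2k},\mathbf{s}_{2k+1})\in\text{Win(Moralist)}$, which is exactly what the lemma asserts.

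There is essentially no obstacle here beyond being explicit about whose turn it is and what ``winning strategy'' means; the statement is the standard downward-closure property of winning regions of the passive player (Devil's) nodes. I would keep the proof to a short paragraph, since the only content is the parity observation plus the definition of a strategy.
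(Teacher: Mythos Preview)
Your proof is correct and follows essentially the same approach as the paper: both arguments observe that at the node $(\mathbf{s}_1,\dots,\mathbf{s}_{2k})$ it is Devil's turn, so a winning strategy for Moralist must remain winning after any Devil move, in particular after $\mathbf{s}_{2k+1}$. You spell out the restriction-of-strategy step a bit more explicitly than the paper does, but the content is the same.
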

\begin{proof}
According to the rules of the two-player game, it is Devil's turn to make a move at the node $(\mathbf{s}_1,\dots, \mathbf{s}_{2k})$. Thus, in order for Moralist to have a winning strategy at this node, Moralist must have such a strategy after each possible move of Devil. In particular, Moralist must have a winning strategy at the node $(\mathbf{s}_1,\dots, \mathbf{s}_{2k},\mathbf{s}_{2k+1})$.  
\end{proof}

The proofs of the next three lemmas are in the appendix.

\begin{lemma}\label{lemma B}
If $(\mathbf{s}_1,\dots, \mathbf{s}_{2k+1})\in \text{\em Win(Moralist)}$, then there exists an action $\mathbf{s}'_{2k+2}$ of agent $2k+2$ such that $(\mathbf{s}_1,\dots, \mathbf{s}_{2k+1},\mathbf{s}'_{2k+2})\in \text{\em Win(Moralist)}$.    
\end{lemma}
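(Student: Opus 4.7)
The plan is to dualise the argument used in Lemma~\ref{lemma A}. In that lemma, the current node $(\mathbf{s}_1,\dots,\mathbf{s}_{2k})$ is one where Devil is to move (index $2k{+}1$ is odd), so a winning strategy for Moralist must be robust against every Devil response. Here, by contrast, the node $(\mathbf{s}_1,\dots,\mathbf{s}_{2k+1})$ is one where Moralist is to move, since the next action is performed by agent $2k+2$, whose index is even and therefore under Moralist's control. So instead of quantifying over all responses, we only need to exhibit the single response that Moralist's winning strategy already prescribes.

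Concretely, first I would unfold the definition of Win(Moralist): a node belongs to this set precisely when there is a strategy (a function assigning to each Moralist node in the subtree an action of the corresponding even-indexed agent) such that, no matter how Devil plays in the subtree, every leaf reached satisfies $\gamma$. Fix such a strategy $\sigma$ at the node $(\mathbf{s}_1,\dots,\mathbf{s}_{2k+1})$. Since it is Moralist's turn at this node, $\sigma$ assigns a specific action of agent $2k+2$; let $\mathbf{s}'_{2k+2}$ be that action. Then I would observe that the restriction of $\sigma$ to the subtree rooted at $(\mathbf{s}_1,\dots,\mathbf{s}_{2k+1},\mathbf{s}'_{2k+2})$ is itself a winning strategy from that child node, because every play consistent with it from the child is also consistent with $\sigma$ from the parent, hence ends in a $\gamma$-satisfying leaf. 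This yields $(\mathbf{s}_1,\dots,\mathbf{s}_{2k+1},\mathbf{s}'_{2k+2})\in \text{Win(Moralist)}$, as required.

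There is essentially no obstacle beyond checking parities: the only place where something could go wrong is if agent $2k+2$ were not one of Moralist's agents, but since Moralist controls exactly the even-indexed variable blocks $\mathbf{v}_2,\mathbf{v}_4,\dots,\mathbf{v}_{2d}$ and $2\le 2k+2\le 2d$ within the relevant range, this is automatic. The argument is short enough that the whole proof can be given as a single ``by definition of Win(Moralist), the winning strategy prescribes some action $\mathbf{s}'_{2k+2}$ of agent $2k+2$, and the same strategy witnesses membership of the resulting node in Win(Moralist).''
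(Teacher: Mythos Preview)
Your proposal is correct and follows essentially the same reasoning as the paper's proof: since the node has odd length, it is Moralist's turn, and any winning strategy for Moralist must prescribe some move leading to a child that is again winning for Moralist. You have simply spelled out the ``restrict the strategy to the subtree'' step more explicitly than the paper does.
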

% \begin{proof}
% According to the rules of the two-player game, it is Moralist's turn to make a move at the node $(\mathbf{s}_1,\dots, \mathbf{s}_{2k+1})$. Thus, in order for Moralist to have a winning strategy at this node, Moralist must have at least one move that transitions the game into another node winning for Moralist. 
% \end{proof}

\begin{lemma}\label{lemma C}
If $\mathbf{s}=(\mathbf{s}_1,\dots, \mathbf{s}_{2d+1})\in \text{\em Win(Moralist)}$, then
$\gamma[\mathbf{s}/\mathbf{v}]=1$.
\end{lemma}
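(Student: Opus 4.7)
The plan is to observe that $\mathbf{s}=(\mathbf{s}_1,\dots,\mathbf{s}_{2d+1})$ specifies values for every variable in $\cup\mathbf{v}$, hence corresponds to a leaf of the extensive form game tree between Devil and Moralist. At a leaf, neither player has any remaining move, so membership in $\text{Win(Moralist)}$ collapses to the base case of the recursive definition of ``winning node''. Since Moralist's stated objective is to achieve $\gamma[\mathbf{s}/\mathbf{v}]=1$, the only way a leaf node can be winning for Moralist is if the deontic constraint is already satisfied there.

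Concretely, I would argue by contradiction (or just directly): suppose $\gamma[\mathbf{s}/\mathbf{v}]=0$. Then the play that has already reached $\mathbf{s}$ is a completed play in which Devil's objective has been achieved and Moralist's has not, contradicting the assumption that Moralist has a winning strategy from this node. Therefore $\gamma[\mathbf{s}/\mathbf{v}]=1$. There is no genuine obstacle here; the content of the lemma is simply to pin down the base case of $\text{Win(Moralist)}$, which Lemmas~\ref{lemma A} and~\ref{lemma B} exploit for internal nodes. The proof is essentially a one-liner unwinding the definition of a winning strategy at a terminal position.
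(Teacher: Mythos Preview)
Your proposal is correct and matches the paper's own proof essentially verbatim: the paper likewise observes that $(\mathbf{s}_1,\dots,\mathbf{s}_{2d+1})$ is a leaf node where neither player can move, so being in $\text{Win(Moralist)}$ forces $\gamma[\mathbf{s}/\mathbf{v}]=1$. There is nothing to add.
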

% \begin{proof}
% The node $(\mathbf{s}_1,\dots, \mathbf{s}_{2d+1})$ is a final (leaf) node in the extensive form game tree. Neither Davil nor Moralist can make a move from this node. Thus, in order for the node to be a winning node for Moralist, the action profile $(\mathbf{s}_1,\dots, \mathbf{s}_{2d+1})$ must satisfy the deontic constraint $\gamma$.  
% \end{proof}

\begin{lemma}\label{lemma D}
For any integer $k$ such that $0\le k\le d$ and any action profile $\mathbf{s}=(\mathbf{s}_1,\dots,\mathbf{s}_{2d+1})$, if $(\mathbf{s}_1,\dots,\mathbf{s}_{2k+1})\in \text{\em Win(Moralist)}$,
then either $\gamma[\mathbf{s}/\mathbf{v}]=1$ or there is positive $d'\le d-k$ and an agent $i\le 2d+1$ who is $d'$-order responsible under action profile $\mathbf{s}$.
\end{lemma}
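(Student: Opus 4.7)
My plan is to proceed by backward induction on $k$, following the same high-level skeleton as the proof of Lemma~\ref{first result lemma} but substituting the ``agreement in the first $k$ slots'' hypothesis with the ``winning-strategy'' hypothesis and replacing direct syntactic manipulation with appeals to Lemmas~\ref{lemma A}--\ref{lemma C}.

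The base case $k=d$ is immediate: the hypothesis becomes $\mathbf{s}=(\mathbf{s}_1,\dots,\mathbf{s}_{2d+1})\in\text{Win(Moralist)}$, a leaf node of the extensive-form game, so Lemma~\ref{lemma C} yields $\gamma[\mathbf{s}/\mathbf{v}]=1$.

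For the inductive step, assume $k<d$ and the statement for $k+1$. Moralist is to move at the node $(\mathbf{s}_1,\dots,\mathbf{s}_{2k+1})$, so Lemma~\ref{lemma B} yields an action $\mathbf{s}''_{2k+2}$ with $(\mathbf{s}_1,\dots,\mathbf{s}_{2k+1},\mathbf{s}''_{2k+2})\in\text{Win(Moralist)}$; then Lemma~\ref{lemma A} upgrades this to: for every $\mathbf{s}''_{2k+3}$, also $(\mathbf{s}_1,\dots,\mathbf{s}_{2k+1},\mathbf{s}''_{2k+2},\mathbf{s}''_{2k+3})\in\text{Win(Moralist)}$. The induction hypothesis, applied to any profile $\mathbf{t}$ extending $(\mathbf{s}_1,\dots,\mathbf{s}_{2k+1},\mathbf{s}''_{2k+2},\mathbf{s}''_{2k+3})$ in its first $2k+3$ components, then yields $(\gamma\vee\bigvee_{d'\le d-k-1}\bigvee_{j\le 2d+1}\R^{d'}_j)[\mathbf{t}/\mathbf{v}]=1$. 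Since $\mathbf{s}''_{2k+2}$ is produced existentially while $\mathbf{s}''_{2k+3}$ and the remaining coordinates of $\mathbf{t}$ range over all possibilities, this translates into the semantic truth at $\mathbf{s}$ of the quantified formula $\exists\mathbf{v}_{2k+2}\forall\mathbf{v}_{2k+3}\dots\forall\mathbf{v}_{2d+1}(\gamma\vee\bigvee_{d'\le d-k-1}\bigvee_{j\le 2d+1}\R^{d'}_j)$, because the values that $\mathbf{s}$ assigns to $\mathbf{v}_{2k+2},\dots,\mathbf{v}_{2d+1}$ are immaterial when those variables are all bound.

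By De Morgan, the displayed quantified formula is exactly the second conjunct in the unfolding of $\R^{d-k}_{2k+2}$ given by equation~\eqref{R definition}. A final case split on whether the first conjunct of $\R^{d-k}_{2k+2}$ also holds at $\mathbf{s}$ finishes the argument: if it does, agent $2k+2$ is $(d-k)$-order responsible under $\mathbf{s}$; if it fails, then either $\gamma[\mathbf{s}/\mathbf{v}]=1$ or some $\R^{d'}_j[\mathbf{s}/\mathbf{v}]=1$ for a positive $d'\le d-k-1<d-k$, and the conclusion holds in either subcase. The main obstacle to watch is the careful alignment between the index $d-k-1$ produced by the induction hypothesis and the nested-conjunction index in equation~\eqref{R definition} for $\R^{d-k}_{2k+2}$; once that bookkeeping is pinned down, the remaining De Morgan manipulation is routine.
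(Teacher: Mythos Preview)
Your proposal is correct and follows essentially the same route as the paper's own proof: backward induction on $k$, base case via Lemma~\ref{lemma C}, inductive step via Lemma~\ref{lemma B} then Lemma~\ref{lemma A}, followed by the induction hypothesis, the passage from $\mathbf{t}$ to $\mathbf{s}$ justified because all of $\mathbf{v}_{2k+2},\dots,\mathbf{v}_{2d+1}$ are bound, De~Morgan to recognise the second conjunct of $\R^{d-k}_{2k+2}$, and the same final case split on the first conjunct. The only difference is cosmetic (your $\mathbf{s}'',\mathbf{t}$ versus the paper's $\mathbf{s}'$), and your index bookkeeping $d-k-1=d-(k+1)$ matches the paper exactly.
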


\begin{lemma}\label{left to right lemma}
If the closed Boolean formula with quantifiers $\forall \mathbf{x}_1\exists\mathbf{x}_2\forall\mathbf{x}_3\dots \forall\mathbf{x}_{2d+1}\phi$ is true, then, for any action profile $\mathbf{s}$, either $\gamma[\mathbf{s}/\mathbf{v}]=1$ or there is positive $d'\le d$ and an agent $i\le 2d+1$ who is $d'$-order responsible under profile~$\mathbf{s}$.   
\end{lemma}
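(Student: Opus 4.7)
The plan is to reduce the statement to an application of Lemma~\ref{lemma D} at $k=0$, using a single bridge step: the truth of the QBF $\forall\mathbf{x}_1\exists\mathbf{x}_2\dots\forall\mathbf{x}_{2d+1}\phi$ forces the root $\epsilon$ of the two-player game tree into $\text{\em Win(Moralist)}$. Once $\epsilon\in\text{\em Win(Moralist)}$ is in hand, Lemma~\ref{lemma A} and Lemma~\ref{lemma D} do the rest mechanically.

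For the bridge step, I would read the truth of the QBF directly as a winning strategy for Moralist in the game played for the formula $\phi$ alone: at every even-indexed turn $2i$, after Devil has fixed values of $\mathbf{x}_1,\mathbf{x}_3,\dots,\mathbf{x}_{2i-1}$, Moralist chooses $\mathbf{x}_{2i}$ to be an existential witness compatible with those choices, guaranteeing that $\phi$ ultimately evaluates to $1$ regardless of Devil's remaining moves. The twist imposed by Definition~\ref{canonical mechanism} is that each even-indexed set $\mathbf{v}_{2i}$ also contains the auxiliary variable $q_{2i}$, which is likewise under Moralist's control. Moralist extends the above strategy by simply setting each $q_{2i}=1$. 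Both conjuncts of $\gamma=\phi\wedge\bigwedge_{1\le i\le d}q_{2i}$ then hold at every leaf reachable under this combined strategy, witnessing $\epsilon\in\text{\em Win(Moralist)}$.

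To finish, I fix an arbitrary action profile $\mathbf{s}=(\mathbf{s}_1,\dots,\mathbf{s}_{2d+1})$. Since Devil's opening move $\mathbf{s}_1$ is arbitrary, applying Lemma~\ref{lemma A} with $k=0$ to $\epsilon\in\text{\em Win(Moralist)}$ yields $(\mathbf{s}_1)\in\text{\em Win(Moralist)}$. Lemma~\ref{lemma D} at $k=0$ applied to the profile $\mathbf{s}$ then delivers exactly the required dichotomy: either $\gamma[\mathbf{s}/\mathbf{v}]=1$, or some agent $i\le 2d+1$ is $d'$-order responsible under $\mathbf{s}$ for some positive $d'\le d$.

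I expect the bridge step to be the only nontrivial part, and its delicacy lies precisely in how Definition~\ref{canonical mechanism} is engineered: the auxiliary variables $q_{2i}$ give Moralist exactly the spare authority she needs to force the mandatory conjuncts of $\gamma$ without interfering with the strategy that reproduces the existential witnesses for $\phi$. The alternation of quantifiers in the QBF matches the alternation of turns in the game between Devil (odd-indexed, universal) and Moralist (even-indexed, existential), and once this correspondence is spelled out the rest of the argument is a direct invocation of the earlier lemmas.
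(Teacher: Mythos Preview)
Your proposal is correct and follows essentially the same approach as the paper: establish $\epsilon\in\text{Win(Moralist)}$ by extending the existential witnesses for $\phi$ with the choice $q_{2i}=1$, then apply Lemma~\ref{lemma A} to obtain $(\mathbf{s}_1)\in\text{Win(Moralist)}$ and invoke Lemma~\ref{lemma D} at $k=0$. The paper phrases the bridge step as the truth of the intermediate QBF $\forall\mathbf{x}_1\exists\mathbf{x}_2\exists q_2\dots\forall\mathbf{x}_{2d+1}(\phi\wedge q_2\wedge\dots\wedge q_{2d})$, which by Definition~\ref{canonical mechanism} is exactly $\forall\mathbf{v}_1\exists\mathbf{v}_2\dots\forall\mathbf{v}_{2d+1}\gamma$, but this is the same argument you give in strategy language.
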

\begin{proof}
Suppose that the closed formula $\forall \mathbf{x}_1\exists\mathbf{x}_2\forall\mathbf{x}_3\dots \forall\mathbf{x}_{2d+1}\phi$ is true. Thus, the formula
\begin{multline*}
\forall\mathbf{x}_1\exists\mathbf{x}_2\exists{q_2}\forall\mathbf{x}_3\exists\mathbf{x}_4\exists{q_4}\dots \exists\mathbf{x}_{2d}\exists q_{2d}\forall\mathbf{x}_{2d+1}\\(\phi\wedge q_2\wedge q_4 \wedge \dots \wedge q_{2d})    
\end{multline*}
is also true because the values of $q_2,q_4,\dots,q_{2d}$ could be chosen to be 1.
Then, by Definition~\ref{canonical mechanism}, the formula 
$\forall \mathbf{v}_1\exists\mathbf{v}_2\forall \mathbf{v}_3\dots \forall\mathbf{v}_{2d+1}\gamma$ is true. 
Recall that Moralist is in charge of even-numbered sets $\mathbf{v}_2,\mathbf{v}_4,\dots,\mathbf{v}_{2d}$.
Thus, at the initial moment in the game, Moralist has a strategy to guarantee that the deontic constraint $\gamma$ is satisfied. In other words, $\epsilon\in \text{Win(Moralist)}$. Suppose $\mathbf{s}=(\mathbf{s}_1,\dots,\mathbf{s}_{2d+1})$.
Then, $(\mathbf{s}_1)\in \text{Win(Moralist)}$ by Lemma~\ref{lemma A}. Therefore, by Lemma~\ref{lemma D} for $k=0$, either $\gamma[\mathbf{s}/\mathbf{v}]=1$ or there is $d'\le d$ and an agent $i\le 2d+1$ who is $d'$-order responsible under action profile $\mathbf{s}$.
\end{proof}

\begin{lemma}\label{right to left lemma}
If the closed Boolean formula with quantifiers $\forall \mathbf{x}_1\exists\mathbf{x}_2\forall\mathbf{x}_3\dots \forall\mathbf{x}_{2d+1}\phi$ is false, then there exists an action profile $\mathbf{s}$ such that $\gamma[\mathbf{s}/\mathbf{v}]=0$ and there is no $d'\le d$ such that at least one of the agents is $d'$-order responsible under profile $\mathbf{s}$.  
\end{lemma}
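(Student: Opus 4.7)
The plan is to construct the desired profile $\mathbf{s}$ explicitly from a Skolemization of the negated formula and then verify, by induction on the order of responsibility, that no agent is $d'$-responsible under $\mathbf{s}$ for any $1 \le d' \le d$.

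Since $\forall\mathbf{x}_1\exists\mathbf{x}_2\cdots\forall\mathbf{x}_{2d+1}\phi$ is false, its negation $\exists\mathbf{x}_1\forall\mathbf{x}_2\cdots\exists\mathbf{x}_{2d+1}\neg\phi$ is true; let $\sigma_0,\ldots,\sigma_d$ be Skolem functions for the outer existentials, so that $\phi$ evaluates to $0$ whenever $\mathbf{x}_{2i+1}=\sigma_i(\mathbf{x}_1,\ldots,\mathbf{x}_{2i})$ for every $i$. I would define $\mathbf{s}$ recursively: at each odd position $2i+1$, let $\mathbf{s}_{2i+1}$ be $\sigma_i$ applied to the preceding $\mathbf{x}$-history; at each even position $2i$, let the $\mathbf{x}_{2i}$-coordinates of $\mathbf{s}_{2i}$ be arbitrary and set the $q_{2i}$-coordinate to $0$. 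Because the $q_2$-coordinate of $\mathbf{s}$ is $0$, item~3 of Definition~\ref{canonical mechanism} gives $\gamma[\mathbf{s}/\mathbf{v}]=0$ at once.

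The remaining task is to prove $\R^{d'}_j[\mathbf{s}/\mathbf{v}]=0$ for every $1 \le d' \le d$ and every $j$, which I would do by induction on $d'$. By the inductive hypothesis and $\gamma[\mathbf{s}/\mathbf{v}]=0$, the gap conjunct in equation~\eqref{R definition} equals $1$, so the task reduces to showing the companion existential-universal block evaluates to $0$ at $\mathbf{s}$. For each hypothetical existential choice of $\mathbf{v}_j$, let the universal respond by exactly the recipe used to build $\mathbf{s}$ itself: set the $q$-coordinate of every later even position to $0$, and set every later odd position to $\sigma_i$ of the updated $\mathbf{x}$-prefix. Call the resulting profile $\mathbf{t}^*$. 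Either some later $q$-coordinate in the universal's scope is $0$, which kills $\gamma$ directly, or the existential has forced $q_j=1$ at an even $j$ while every subsequent Devil move follows $\sigma$, so the winning property of $\sigma$ forces $\phi(\mathbf{t}^*)=0$ and again $\gamma(\mathbf{t}^*)=0$.

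To make the lower-order disjuncts $\R^{d''}_i$ vanish at $\mathbf{t}^*$, the inductive hypothesis must be strengthened to apply not only to $\mathbf{s}$ but to every profile produced by this recipe from some valid prefix---concretely, to the class of profiles that are either uniformly zero in their $q$-coordinates or else $\sigma$-played with at least one zero $q$-coordinate. Each $\mathbf{t}^*$ that arises lies in this class, so the strengthened hypothesis yields $\R^{d''}_i(\mathbf{t}^*)=0$ for every $d''<d'$ and $i$, closing the induction. The main obstacle is the case where the existential chooses $q_j=1$ at an even $j$: then $\mathbf{t}^*$ carries a single $q$-coordinate set to $1$ rather than the uniform zero pattern of $\mathbf{s}$, and a case analysis on $i$ is needed to confirm that no $\R^{d''}_i(\mathbf{t}^*)$ is enabled. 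The critical sub-cases are $i=2d$ and $i=2d+1$, whose ``move'' existentials re-quantify only one agent's variables and therefore cannot fall back on a later universal $q$-coordinate; in each, the $\sigma$-consistency of the suffix of $\mathbf{t}^*$ reduces the claim to a $\phi$-statement blocked by Devil's winning strategy, and I expect this bookkeeping to be the bulk of the detailed work.
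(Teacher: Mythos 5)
Your choice of witness profile is exactly the right one (all $q$-coordinates set to $0$, Devil's moves played according to a winning strategy $\sigma$ for $\exists\mathbf{x}_1\forall\mathbf{x}_2\dots\exists\mathbf{x}_{2d+1}\neg\phi$), and the observation that $q_2=0$ forces $\gamma[\mathbf{s}/\mathbf{v}]=0$ is fine. The gap is in the verification that no agent is $d'$-order responsible. Your induction needs, at order $d'$, that every ``recipe'' completion $\mathbf{t}^*$ lies in the gap of order $d'-1$, and you propose to get this from a strengthened hypothesis asserting that no agent is responsible (at the relevant orders) at any profile in the class ``uniformly zero $q$-coordinates, or $\sigma$-played with at least one zero $q$-coordinate.'' That invariant is false. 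Take $d=2$ and $\phi=x_1\wedge x_2$ (with $x_3,x_4,x_5$ dummy): the formula $\forall x_1\exists x_2\dots\phi$ is false and Devil's winning strategy is $x_1=0$. Now consider the class member $\mathbf{t}$ with $x_1=1$ (an off-$\sigma$ Devil move after which Moralist wins the $\phi$-game) and $q_2=q_4=0$. One checks that $\mathbf{t}$ is in the first-order gap, yet agent $2$ \emph{is} second-order responsible at $\mathbf{t}$: choosing $q_2=1$ and the winning $x_2$ guarantees that every completion either satisfies $\gamma$ or makes agent $4$ first-order responsible, so $\R^2_2[\mathbf{t}/\mathbf{v}]=1$. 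A similar failure occurs in the other branch of your class: a $\sigma$-played profile with exactly one zero $q$-coordinate and all earlier $q$'s equal to $1$ can make agent $2d$ second-order responsible. So the qualitative class membership cannot carry the induction; moreover the class is not even preserved by your recipe (an even agent's re-choice at position $2d$ has no later $q$ to zero out, and an odd agent's re-choice destroys $\sigma$-play), so exactly the profiles your hypothesis must cover for $d\ge 3$ are ones where it is false or unavailable.

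What is actually needed is a \emph{quantitative} invariant: how many independent reasons for $\neg\gamma$ (zero $q$-coordinates, plus conformance with $\sigma$) survive, compared with the responsibility order one is trying to exclude -- the point being that each additional order of responsibility can ``repair'' at most one agent's action. This is precisely how the paper argues: it defines the ``degree of immorality'' $\|\mathbf{s}\|$ (number of zero $q$'s, plus one if Devil follows $\sigma$), shows that $\gamma$-satisfying profiles have degree $0$, that changing a single agent's action changes the degree by at most one, and then proves by induction that $k$-order responsibility forces $\|\mathbf{s}\|\le k$. Since the chosen profile has degree $d+1$, no agent can be $d'$-order responsible for $d'\le d$. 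Your proposal identifies the right profile and the right difficulties (the cases $i=2d,2d+1$), but without a counting argument of this kind the inductive step does not close; I would replace the class invariant by the degree bound, which handles off-$\sigma$ and flipped-$q$ completions uniformly.
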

\begin{proof}
The assumption that the closed formula $\forall \mathbf{x}_1\exists\mathbf{x}_2\forall\mathbf{x}_3\dots \forall\mathbf{x}_{2d+1}\phi$ is false implies that the formula
$\exists \mathbf{x}_1\forall\mathbf{x}_2\exists \mathbf{x}_3\dots \exists\mathbf{x}_{2d+1}\neg\phi$ is true. Thus, the formula
\begin{multline*}
\exists\mathbf{x}_1\forall\mathbf{x}_2\forall{q_2}\exists\mathbf{x}_3\forall\mathbf{x}_4\forall{q_4}\dots \forall\mathbf{x}_{2d}\forall q_{2d}\exists\mathbf{x}_{2d+1}\\
\neg (\phi\wedge q_2\wedge q_4 \wedge \dots \wedge q_{2d})
\end{multline*}
is also true because to make a conjunction false it suffices to guarantee that the first conjunct is false. Then, by Definition~\ref{canonical mechanism}, the formula 
$\exists \mathbf{v}_1\forall\mathbf{v}_2\exists \mathbf{v}_3\dots \exists\mathbf{v}_{2d+1}\neg\gamma$ is true. Recall that Devil is in charge of odd-numbered sets $\mathbf{v}_1,\mathbf{v}_3,\dots,\mathbf{v}_{2d+1}$.
Thus, at the initial moment in the game, Devil has (at least one) strategy to guarantee that the deontic constraint $\gamma$ is violated. Let $\sigma$ be one of such strategies.

For any action profile  $\mathbf{s}=(\mathbf{s}_1,\dots, \mathbf{s}_{2d+1})$, we define the ``degree of immorality'' $\|\mathbf{s}\|$ of profile $\mathbf{s}$ as the total number of ``sins'' committed under profile $\mathbf{s}$. The ``sins'' are defined as follows:
\begin{enumerate}
    \item each $i\le 2d+1$ such that $q_{2i}[\mathbf{s}/\mathbf{v}]=0$ is a ``sin'',
    \item it is a ``sin'' if Devil follows strategy $\sigma$ under the profile $\mathbf{s}_1,\dots, \mathbf{s}_{2d+1}$.
\end{enumerate}
Note that $\gamma=\phi\wedge\bigwedge_{1\le i\le d}q_{2i}$ by Definition~\ref{canonical mechanism}. Thus, committing even a single ``sin'' under profile~$\mathbf{s}$ guarantees that $\gamma[\mathbf{s}/\mathbf{v}]=0$. 
Committing two sins ``double-proofs'' it. Committing three sins ``triple-proofs'' it. Intuitively, the ``degree of immorality'' $\|\mathbf{s}\|$ is the measurement of how ``evil'' is the behaviour of the agents 1, \dots, 2d+1 under a profile $\mathbf{s}$.

\begin{claim}\label{zero degree claim}
For any profile $\mathbf{s}$, if $\gamma[\mathbf{s}/\mathbf{v}]=1$, then $\|\mathbf{s}\|=0$.   
\end{claim}
\begin{proof-of-claim}
The statement of the claim holds because committing even a single ``sin'' guarantees that $\gamma[\mathbf{s}/\mathbf{v}]=0$.   
\end{proof-of-claim}

% \begin{claim}\label{one flip claim}
% For any action profiles $(\mathbf{s}_1,\dots, \mathbf{s}_{2d+1})$ and $(\mathbf{s}'_1,\dots, \mathbf{s}'_{2d+1})$, if there is at most one $i\le 2d+1$ such that $\mathbf{s}_i\neq \mathbf{s}'_i$, then $\|(\mathbf{s}_1,\dots, \mathbf{s}_{2d+1})\|\le \|(\mathbf{s}'_1,\dots, \mathbf{s}'_{2d+1})\|+1$.    
% \end{claim}
% \begin{proof-of-claim}
% The action profile $(\mathbf{s}_1,\dots, \mathbf{s}_{2d+1})$ is obtained from the profile  $(\mathbf{s}'_1,\dots, \mathbf{s}'_{2d+1})$ by changing a single action. By the definition of the ``degree of immorality'', such a change increases the number of committed ``sin''s by no more than one.    
% \end{proof-of-claim}

% \begin{claim}
% If an agent is $k$-responsible under a profile  $\mathbf{s}_1,\dots, \mathbf{s}_{2d+1}$, then $\|\mathbf{s}_1,\dots, \mathbf{s}_{2d+1}\|\le k$.
% \end{claim}

\begin{claim}\label{new claim}
For any action profiles $\mathbf{s}=(\mathbf{s}_1,\dots, \mathbf{s}_{2d+1})$, any $i\le 2d+1$ and any $\mathbf{s}'_i$, there is an action profile $\mathbf{s}'=(\mathbf{s}_1,\dots,\mathbf{s}_{i-1},\mathbf{s}'_i,\mathbf{s}'_{i+1},\dots ,\mathbf{s}'_{2d+1})$ such that 
$
\|\mathbf{s}\|-1\le 
\|\mathbf{s}'\|.
$
\end{claim}
\begin{proof-of-claim}
If number $i$ is odd, then let $\mathbf{s}'$ be the action profile  $(\mathbf{s}_1,\dots,\mathbf{s}_{i-1},\mathbf{s}'_i,\mathbf{s}_{i+1},\dots ,\mathbf{s}_{2d+1})$. Note that $q_{2j}[\mathbf{s}/\mathbf{v}]=q_{2j}[\mathbf{s}'/\mathbf{v}]$ for each $j\le d$ because $i$ is odd. Therefore,
$
\|\mathbf{s}\|-1\le 
\|\mathbf{s}'\|
$ by the definition of the degree of immortality.

If $i$ is even, then Devil is {\em not} in charge of set $\mathbf{v}_i$. Thus, there is an action profile 
$$
(\mathbf{s}_1,..,\mathbf{s}_{i-1},\mathbf{s}'_i,\mathbf{s}'_{i+1},\mathbf{s}_{i+2},\mathbf{s}'_{i+3},\mathbf{s}_{i+4},\mathbf{s}'_{i+5},..,\mathbf{s}_{2d},\mathbf{s}'_{2d+1})$$
under which Devil still follows strategy $\sigma$ if Devil followed $\sigma$ under profile $\|\mathbf{s}\|$. Let $\mathbf{s}'$ be such a profile. Then, 
$
\|\mathbf{s}\|-1\le 
\|\mathbf{s}'\|
$
because $q_{2j}[\mathbf{s}/\mathbf{v}]=q_{2j}[\mathbf{s}'/\mathbf{v}]$ when $2j\neq i$. 
\end{proof-of-claim}

\begin{claim}\label{new strong claim}
For  any profile $\mathbf{s}$, any positive integer $k<\|\mathbf{s}\|$, and any agent $i\le 2d+1$, agent $i$ is not $k$-order responsible under action profile  $\mathbf{s}$. 
\end{claim}
\begin{proof-of-claim}
We prove the statement by induction on $\|\mathbf{s}\|$. If $\|\mathbf{s}\|\le 1$, then the statement vacuously holds because there are no positive integers $k$ such that $k<1$. 

Suppose that there is an integer $k$ such that 
\begin{equation}\label{19-oct-b}
1<k<\|\mathbf{s}\|    
\end{equation}
and a $k$-order responsible agent $i\le 2d+1$ under $\mathbf{s}$. Thus,  
$\R^k_i[\mathbf{s}/\mathbf{v}]=1$ by Definition~\ref{d-order responsibility}. Hence, by equation~\eqref{R definition},
% \begin{align*}
% &\left(\neg\gamma \wedge \bigwedge_{j\le n}\neg\R_j\wedge \bigwedge_{j\le n}\neg\R^2_j\wedge \dots \wedge \bigwedge_{j\le n}\neg\R^{k-1}_j\right)\wedge\nonumber\\
% &\exists\mathbf{v}_i\forall\mathbf{v}_{i+1}\dots\forall\mathbf{v}_n \nonumber \\
% &\hspace{-0mm}\neg\left(\!\neg\gamma \wedge \bigwedge_{j\le n}\!\neg\R_j\wedge \bigwedge_{j\le n}\!\neg\R^2_j\wedge \dots\wedge \bigwedge_{j\le n}\neg\R^{k-1}_j\!\right)[\mathbf{s}/\mathbf{v}]=1.
% \end{align*}
% Thus,
\begin{align*}
&\exists\mathbf{v}_i\forall\mathbf{v}_{i+1}\dots\forall\mathbf{v}_{2d+1} \nonumber \\
&\hspace{-0mm}\neg\left(\!\neg\gamma \wedge \bigwedge_{j\le 2d+1}\!\neg\R_j\wedge \dots\wedge \bigwedge_{j\le 2d+1}\neg\R^{k-1}_j\!\right)[\mathbf{s}/\mathbf{v}]=1.
\end{align*}
Let $\mathbf{s}=(\mathbf{s}_1,\dots, \mathbf{s}_{2d+1})$. Then, there is $\mathbf{s}'_i$ such that
\begin{align*}
\neg\left(\!\neg\gamma \wedge \bigwedge_{j\le 2d+1}\!\neg\R_j \dots\wedge \bigwedge_{j\le 2d+1}\neg\R^{k-1}_j\!\right)[\mathbf{s}'/\mathbf{v}]=1
\end{align*}
for {\em each} profile $\mathbf{s}'=(\mathbf{s}_1,\dots,\mathbf{s}_{i-1},\mathbf{s}'_i,\mathbf{s}'_{i+1},\dots ,\mathbf{s}'_{2d+1})$.
By Claim~\ref{new claim}, profile  $\mathbf{s}'$ can be {\em chosen} in such a way that
\begin{equation}\label{new eq}
\|\mathbf{s}\|-1\le 
\|\mathbf{s}'\|. 
\end{equation}
Then,
\begin{equation}\label{19-oct-a}
\left(\!\gamma \vee \bigvee_{j\le 2d+1}\!\R_j\vee \dots\vee \bigvee_{j\le 2d+1}\R^{k-1}_j\!\right)[\mathbf{s}'/\mathbf{v}]=1.
\end{equation}
Inequalities~\eqref{19-oct-b} and \eqref{new eq} imply that $\|\mathbf{s}'\|\ge k> 1$. Thus, $\gamma[\mathbf{s}'/\mathbf{v}]=0$ by Claim~\ref{zero degree claim}. Hence, by equation~\eqref{19-oct-a}, there is $j\le 2d+1$ and $k'\le k-1$ such that
$\R^{k'}_j[\mathbf{s}'/\mathbf{v}]=1$. Then, by the induction hypothesis,
$\|\mathbf{s}'\|<k'$. Thus, by inequality~\eqref{new eq},
$$\|\mathbf{s}\|\le\|\mathbf{s}'\|+1<k'+1\le (k-1)+1=k,$$  
which contradicts inequality~\eqref{19-oct-b}.
\end{proof-of-claim}

Consider any action profile $\mathbf{s}$ under which Devil follows strategy $\sigma$ and $q_{2i}[\mathbf{s}/\mathbf{v}]=0$ for each $i\le d$. In other words, we consider an action profile under which all $d+1$ possible ``sin''s have been committed. Thus,
\begin{equation}\label{2-oct-a}
\|\mathbf{s}\|=d+1.    
\end{equation}
Note that $\gamma[\mathbf{s}/\mathbf{v}]=0$, because Devil's strategy $\sigma$ guarantees that the deontic constraint is violated.
To finish the proof of the lemma, suppose that there is $d'\le d$ such that at least one agent is $d'$-order responsible under $\mathbf{s}$. Thus, $d'\ge \|\mathbf{s}\|$ by Claim~\ref{new strong claim}. Therefore, $d\ge d'\ge \|\mathbf{s}\|=d+1$ by statement~\eqref{2-oct-a}, which is a contradiction.
\end{proof}
The next result follows from the two previous lemmas.

%\vspace{-2mm}
\begin{lemma}\label{Pi 2d+1 hard}
Set ${\sf GF}^d$ is $\Pi_{2d+1}$-hard.    
\end{lemma}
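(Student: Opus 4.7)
The plan is to give a polynomial-time many-one reduction from the canonical $\Pi_{2d+1}$-complete problem -- deciding truth of closed prenex formulas of the shape $\forall \mathbf{x}_1\exists\mathbf{x}_2\forall\mathbf{x}_3\dots \forall\mathbf{x}_{2d+1}\phi$ with quantifier-free matrix $\phi$ -- to membership in $\GF^d$. The reduction itself is already on the table: it is the map $\phi \mapsto M(\phi,\mathbf{x}_1,\dots,\mathbf{x}_{2d+1})$ supplied by Definition~\ref{canonical mechanism}. Computing this mechanism from $\phi$ is clearly polynomial-time, since we only adjoin $d$ fresh ``flag'' variables $q_2,q_4,\dots,q_{2d}$ to the even-indexed blocks and set $\gamma=\phi\wedge\bigwedge_{1\le i\le d} q_{2i}$.

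The heart of the reduction is the correctness equivalence
$$\forall \mathbf{x}_1\exists\mathbf{x}_2\forall\mathbf{x}_3\dots \forall\mathbf{x}_{2d+1}\phi \text{ holds} \;\Longleftrightarrow\; M(\phi,\mathbf{x}_1,\dots,\mathbf{x}_{2d+1})\in \GF^d,$$
and this is precisely what the two preceding lemmas supply. For the forward direction, Lemma~\ref{left to right lemma} says that whenever the quantified formula is true, every action profile $\mathbf{s}$ of the canonical mechanism either satisfies $\gamma$ or admits some agent who is $d'$-order responsible for some $d'\le d$; by Definition~\ref{d-gap-free} this is exactly $d$-gap-freeness. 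The backward direction is the contrapositive of Lemma~\ref{right to left lemma}: if the quantified formula is false, there is a profile $\mathbf{s}$ violating $\gamma$ under which no agent is $d'$-order responsible for any $d'\le d$, so the mechanism fails to be in $\GF^d$.

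The main obstacle to the whole hardness argument -- showing that the contrived mechanism $M(\phi,\mathbf{x}_1,\dots,\mathbf{x}_{2d+1})$ actually encodes the quantifier alternation as a ladder of higher-order responsibility gaps -- has already been absorbed into Lemmas~\ref{left to right lemma} and~\ref{right to left lemma}, which were established via the Devil-versus-Moralist game together with the ``degree of immorality'' counting argument. What remains for this short proof is only the final assembly: invoke the $\Pi_{2d+1}$-completeness of the generic quantified-Boolean validity problem with $2d+1$ alternating blocks starting with $\forall$, observe that the reduction is polynomial in $|\phi|$ (only $d$ fresh variables and $d$ conjuncts are added), and combine the two lemmas to conclude that $\GF^d$ is $\Pi_{2d+1}$-hard.
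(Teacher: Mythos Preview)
Your proposal is correct and matches the paper's approach exactly: the paper's own proof is the single sentence ``The next result follows from the two previous lemmas,'' and you have simply spelled out how Lemmas~\ref{left to right lemma} and~\ref{right to left lemma} combine with Definition~\ref{canonical mechanism} to give a polynomial reduction from the canonical $\Pi_{2d+1}$ QBF problem. Your added remarks about polynomial-time computability of the reduction and the appeal to Definition~\ref{d-gap-free} are the natural elaborations the paper leaves implicit.
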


%\vspace{-2mm}
\begin{theorem}\label{complete theorem}
Set ${\sf GF}^d$ is $\Pi_{2d+1}$-complete.    
\end{theorem}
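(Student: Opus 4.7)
The plan is to assemble the theorem immediately from the two lemmas that precede it. Lemma~\ref{in Pi 2d+1} already establishes the upper bound ${\sf GF}^d \in \Pi_{2d+1}$, and Lemma~\ref{Pi 2d+1 hard} supplies matching $\Pi_{2d+1}$-hardness, so by the definition of $\Pi_{2d+1}$-completeness the theorem is a one-line consequence. The substantive work has already been done: Claim~\ref{24-sep-b} tracks the quantifier alternation depth of $\R^d_i$ by induction, and the chain Lemma~\ref{lemma D} $\to$ Lemma~\ref{left to right lemma} $\to$ Lemma~\ref{right to left lemma} bridges truth of the canonical $\Pi_{2d+1}$ formula with $d$-gap-freeness of the mechanism from Definition~\ref{canonical mechanism}.

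The one point I would make explicit in the write-up is that the reduction used in Lemma~\ref{Pi 2d+1 hard} is in fact polynomial: the construction $\phi \mapsto M(\phi,\mathbf{x}_1,\dots,\mathbf{x}_{2d+1})$ from Definition~\ref{canonical mechanism} only introduces $d$ fresh propositional variables $q_2,q_4,\dots,q_{2d}$ and produces the deontic constraint $\gamma = \phi \wedge \bigwedge_{1\le i\le d} q_{2i}$, so both the formula and the partition of variables into agent-owned ordered sets can be written down in linear time in the size of $\phi$ plus the description of the quantifier prefix. Since validity of closed prenex Boolean formulae with quantifier prefix $\forall\mathbf{x}_1\exists\mathbf{x}_2\forall\mathbf{x}_3\dots\forall\mathbf{x}_{2d+1}$ is a canonical $\Pi_{2d+1}$-complete problem, this gives the needed reduction from an arbitrary $\Pi_{2d+1}$ problem to membership in ${\sf GF}^d$.

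Honestly, I do not see a genuine obstacle at the level of the theorem itself; the theorem functions as a packaging result. The only thing I would double-check while writing is that the membership proof in Lemma~\ref{in Pi 2d+1} and the hardness proof in Lemma~\ref{Pi 2d+1 hard} line up on the same index convention (both giving exactly $\Pi_{2d+1}$, not $\Pi_{2d}$ or $\Pi_{2d+2}$), which they do because Claim~\ref{24-sep-b} places $\R^d_i$ in $\Sigma_{2d}$ and prepending the universal block $\forall\mathbf{v}_1\dots\forall\mathbf{v}_n$ raises the level by exactly one.
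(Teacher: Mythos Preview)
Your proposal is correct and matches the paper's approach exactly: the paper's proof of this theorem is the single sentence ``The statement of the theorem follows from Lemma~\ref{in Pi 2d+1} and Lemma~\ref{Pi 2d+1 hard}.'' Your additional remarks about the polynomiality of the reduction in Definition~\ref{canonical mechanism} and the index alignment are accurate and useful elaborations, but at the level of this theorem there is nothing more to say.
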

\begin{proof}
The statement of the theorem follows from Lemma~\ref{in Pi 2d+1} and Lemma~\ref{Pi 2d+1 hard}.  
\end{proof}

\section{Conclusion}

In this paper, we have studied higher-order responsibility gaps in sequential decision-making mechanisms. We have shown that the set of $d$-order gap-free mechanisms is $\Pi_{2d+1}$-complete. At the same time, the mechanism is guaranteed to be $d$-order gap-free when the number of agents is at most $d$. 

\section{Acknowledgments}

The authors acknowledge the support of the project of the National Social Science Fund of China (No.25BZX072). We also would like to acknowledge the contribution of the AI Reviewer who discovered a non-trivial gap in the proof of Lemma~\ref{right to left lemma} that we have been able to fix.

\bibliography{naumov}

\clearpage

\addtolength{\oddsidemargin}{2.5cm}
	\addtolength{\evensidemargin}{2.5cm}
	\addtolength{\textwidth}{-5cm}

	\addtolength{\topmargin}{2.5cm}
	\addtolength{\textheight}{-5cm}

%\newgeometry{left=5cm,right=5cm,top=5cm,bottom=5cm}

\onecolumn

\begin{center}
    {\LARGE\sc Technical Appendix}

\vspace{3mm}

    {\bf This appendix is not a part of the AAAI-26 proceedings.}

\vspace{2mm}
\end{center}

\noindent{\bf Lemma~\ref{first result lemma}} {\em
%\begin{lemma}\label{first result lemma}
For any mechanism $(n,\mathbf{v},\gamma)$, any nonnegative $k\le n$, any action profile $\mathbf{s}$   such that $\gamma[\mathbf{s}/\mathbf{v}]=1$, and any action profile $\mathbf{s}'$ such that
$\mathbf{s}_i=\mathbf{s}'_i$ for each $i\le k$, either $\gamma[\mathbf{s}'/\mathbf{v}]=1$ or there is positive $d\le n-k$ and an agent $i\le n$ who is $d$-order responsible under action profile $\mathbf{s}'$.  
}%\end{lemma}

\begin{proof} We prove the statement of the lemma by backward induction on $k$. If $k=n$, then $\mathbf{s}=\mathbf{s}'$ by the assumption that $\mathbf{s}_i=\mathbf{s}'_i$ for each $i\le k$. Thus, $\gamma[\mathbf{s}'/\mathbf{v}]=1$ by the assumption  $\gamma[\mathbf{s}/\mathbf{v}]=1$. 

Suppose that $k<n$. 
By the induction hypothesis and Definition~\ref{d-order responsibility}, 
$$\left(\gamma \vee \bigvee_{j\le n}\R_j\vee \bigvee_{j\le n}\R^2_j\vee \dots \vee \bigvee_{j\le n}\R^{n-(k+1)}_j\right)[\mathbf{s}''/\mathbf{v}]=1$$
for each action profile $\mathbf{s}''$ such that $\mathbf{s}_i=\mathbf{s}''_i$ for each $i\le k+1$. Then,
\begin{equation*}
\left(\forall\mathbf{v}_{k+2}\dots\forall\mathbf{v}_{n}\left(\gamma \vee \bigvee_{j\le n}\R_j\vee \bigvee_{j\le n}\R^2_j\vee\dots \vee \bigvee_{j\le n}\R^{n-(k+1)}_j\right)\right)[\mathbf{s}/\mathbf{v}]=1.
\end{equation*}
Thus,
\begin{multline*}
\left(\exists \mathbf{v}_{k+1}\forall\mathbf{v}_{k+2}\dots\forall\mathbf{v}_{n}\left(\gamma \vee \bigvee_{j\le n}\R_j\vee \bigvee_{j\le n}\R^2_j\vee\right.\right. 
\left.\left.\dots \vee \bigvee_{j\le n}\R^{n-(k+1)}_j\right)\right)[\mathbf{s}/\mathbf{v}]=1.
\end{multline*}
Then,
\begin{multline*}
\left(\exists \mathbf{v}_{k+1}\forall\mathbf{v}_{k+2}\dots\forall\mathbf{v}_{n}\left(\gamma \vee \bigvee_{j\le n}\R_j\vee \bigvee_{j\le n}\R^2_j\vee\right.\right.
\left.\left.\dots \vee \bigvee_{j\le n}\R^{n-(k+1)}_j\right)\right)[\mathbf{s}'/\mathbf{v}]=1
\end{multline*}
by the assumption  of the lemma that $\mathbf{s}_i=\mathbf{s}'_i$ for each $i\le k$. Hence, by De Morgan laws,
\begin{multline}\label{21-sep-a}
\left(\exists \mathbf{v}_{k+1}\forall\mathbf{v}_{k+2}\dots\forall\mathbf{v}_{n}\neg\left(\neg\gamma \wedge \bigwedge_{j\le n}\neg\R_j\wedge \bigwedge_{j\le n}\neg\R^2_j\wedge\right.\right.\\ 
\left.\left.\dots \wedge \bigwedge_{j\le n}\neg\R^{n-(k+1)}_j\right)\right)[\mathbf{s}'/\mathbf{v}]=1.    
\end{multline}

If agent $k+1$ is $(n-k)$-order responsible under action profile $\mathbf{s}'$, then the conclusion of the lemma is true. Suppose otherwise. Thus, $\R^{n-k}_{k+1}[\mathbf{s}'/\mathbf{v}]=0$ by Definition~\ref{d-order responsibility}. Hence, by equations~\eqref{R definition} and~\eqref{21-sep-a}, 
\begin{equation*}
\left(\neg\gamma \wedge \bigwedge_{j\le n}\neg\R_j\wedge \bigwedge_{j\le n}\neg\R^2_j\wedge \dots\wedge \bigwedge_{j\le n}\neg\R^{n-(k+1)}_j\right)[\mathbf{s}'/\mathbf{v}]=0.
\end{equation*}
Then, again, by De Morgan's laws,
$$\left(\gamma \vee \bigvee_{j\le n}\R_j\vee \bigvee_{j\le n}\R^2_j\vee \dots \vee \bigvee_{j\le n}\R^{n-(k+1)}_j\right)[\mathbf{s}'/\mathbf{v}]=1.$$
%by the assumption $\gamma[\mathbf{s}'/\mathbf{v}]=0$ of the lemma,
% $$\left(\bigwedge_{j\le n}\neg\R_j\wedge \bigwedge_{j\le n}\neg\R^2_j\wedge \dots \wedge \bigwedge_{j\le n}\neg\R^{n-(k+1)}_j\right)[\mathbf{s}'/\mathbf{v}]=0.$$
Thus, either $\gamma[\mathbf{s}'/\mathbf{v}]=1$ or there is $d\le n-(k+1)<n-k$ and an agent $j\le n$ such that $\R^{d}_j[\mathbf{s}'/\mathbf{v}]=1$. Therefore, either $\gamma[\mathbf{s}'/\mathbf{v}]=1$ or agent $j$ is $d$-order responsible under profile $\mathbf{s}'$ by Definition~\ref{d-order responsibility}. 
\end{proof}

\noindent{\bf Lemma~\ref{lemma B}} {\em
%\begin{lemma}\label{lemma B}
If $(\mathbf{s}_1,\dots, \mathbf{s}_{2k+1})\in \text{\em Win(Moralist)}$, then there exists an action $\mathbf{s}'_{2k+2}$ of agent $2k+2$ such that $(\mathbf{s}_1,\dots, \mathbf{s}_{2k+1},\mathbf{s}'_{2k+2})\in \text{\em Win(Moralist)}$.    
}%\end{lemma}
\begin{proof}
According to the rules of the two-player game, it is Moralist's turn to make a move at the node $(\mathbf{s}_1,\dots, \mathbf{s}_{2k+1})$. Thus, in order for Moralist to have a winning strategy at this node, Moralist must have at least one move that transitions the game into another node winning for Moralist. 
\end{proof}

\noindent{\bf Lemma~\ref{lemma C}} {\em
%\begin{lemma}\label{lemma C}
If $\mathbf{s}=(\mathbf{s}_1,\dots, \mathbf{s}_{2d+1})\in \text{\em Win(Moralist)}$, then
$\gamma[\mathbf{s}/\mathbf{v}]=1$.
}%\end{lemma}
\begin{proof}
The node $(\mathbf{s}_1,\dots, \mathbf{s}_{2d+1})$ is a final (leaf) node in the extensive form game tree. Neither Davil nor Moralist can make a move from this node. Thus, in order for the node to be a winning node for Moralist, the action profile $(\mathbf{s}_1,\dots, \mathbf{s}_{2d+1})$ must satisfy the deontic constraint $\gamma$.  
\end{proof}

\noindent{\bf Lemma~\ref{lemma D}} {\em
%\begin{lemma}\label{lemma D}
For any integer $k$ such that $0\le k\le d$ and any action profile $\mathbf{s}=(\mathbf{s}_1,\dots,\mathbf{s}_{2d+1})$, if $(\mathbf{s}_1,\dots,\mathbf{s}_{2k+1})\in \text{\em Win(Moralist)}$,
then either $\gamma[\mathbf{s}/\mathbf{v}]=1$ or there is positive $d'\le d-k$ and an agent $i\le 2d+1$ who is $d'$-order responsible under action profile $\mathbf{s}$.
}%\end{lemma}

\begin{proof}
We prove the statement of the lemma by backward induction on $k$. In the base case, $k=d$, the statement of the lemma follows from Lemma~\ref{lemma C}.

Assume that $k<d$. Suppose that $(\mathbf{s}_1,\dots,\mathbf{s}_{2k+1})\in \text{ Win(Moralist)}$. Then, by Lemma~\ref{lemma B}, there is an action $\mathbf{s}'_{2k+2}$ of agent $2k+2$ such that 
\begin{equation}\label{29-sep-a}
(\mathbf{s}_1,\dots,\mathbf{s}_{2k+1},\mathbf{s}'_{2k+2})\in \text{ Win(Moralist)}.    
\end{equation}
Consider arbitrary actions $\mathbf{s}'_{2k+3},\dots,\mathbf{s}'_{2d+1}$ of agents $2k+3,\dots, 2d+1$. Let 
\begin{equation}\label{29-sep-b}
\mathbf{s}'=(\mathbf{s}_1,\dots,\mathbf{s}_{2k+1},\mathbf{s}'_{2k+2},\mathbf{s}'_{2k+3},\dots,\mathbf{s}'_{2d+1}).    
\end{equation}
Note that $(\mathbf{s}_1,\dots,\mathbf{s}_{2k+1},\mathbf{s}'_{2k+2},\mathbf{s}'_{2k+3})\in \text{ Win(Moralist)}$ by Lemma~\ref{lemma A} and statement~\eqref{29-sep-a}. Thus, by the induction hypothesis applied to profile $\mathbf{s}'$ and Definition~\ref{d-order responsibility},
\begin{equation*}
    \left(\gamma\vee \bigvee_{i\le 2d+1}\R_i\vee  
    \bigvee_{i\le 2d+1}\R^2_i\vee\right.
    \left.\dots \vee
    \bigvee_{i\le 2d+1}\R^{d-(k+1)}_i\right)[\mathbf{s}'/\mathbf{v}]=1.
\end{equation*}
Recall that $\mathbf{s}'_{2k+3},\dots,\mathbf{s}'_{2d+1}$ are arbitrary actions. Thus,
\begin{multline*}
    \left(\forall \mathbf{v}_{2k+3}\dots
    \forall \mathbf{v}_{2d+1}
    \left(\gamma\vee \bigvee_{i\le 2d+1}\R_i\vee  
    \bigvee_{i\le 2d+1}\R^2_i\vee\right.\right.\\
    \left.\left.\dots \vee
    \bigvee_{i\le 2d+1}\R^{d-(k+1)}_i\right)\right)[\mathbf{s}'/\mathbf{v}]=1.
\end{multline*}
Hence,
\begin{multline*}
    \left(\exists \mathbf{v}_{2k+2}
    \forall \mathbf{v}_{2k+3}\dots
    \forall \mathbf{v}_{2d+1}
    \left(\gamma\vee \bigvee_{i\le 2d+1}\R_i\vee\right.\right.\\  
    \bigvee_{i\le 2d+1}\R^2_i\vee
    \left.\left.\dots \vee
    \bigvee_{i\le 2d+1}\R^{(d-k)-1}_i\right)\right)[\mathbf{s}'/\mathbf{v}]=1.
\end{multline*}
Note that, in the above Boolean formula with quantifiers, the free variables are only those that belong to the ordered sets $\mathbf{v}_1,\dots,\mathbf{v}_{2k+1}$. Also, that profiles $\mathbf{s}$ and $\mathbf{s}'$ assign these variables the same values due to equation~\eqref{29-sep-b}. Then,
\begin{multline*}
    \left(\exists \mathbf{v}_{2k+2}
    \forall \mathbf{v}_{2k+3}\dots
    \forall \mathbf{v}_{2d+1}
    \left(\gamma\vee \bigvee_{i\le 2d+1}\R_i\vee\right.\right.\\  
    \bigvee_{i\le 2d+1}\R^2_i\vee
    \left.\left.\dots \vee
    \bigvee_{i\le 2d+1}\R^{(d-k)-1}_i\right)\right)[\mathbf{s}/\mathbf{v}]=1.
\end{multline*}
Thus, by de Morgan's law,
\begin{multline*}
    \left(\exists \mathbf{v}_{2k+2}
    \forall \mathbf{v}_{2k+3}\dots
    \forall \mathbf{v}_{2d+1}\neg
    \left(\neg\gamma\wedge \bigwedge_{i\le 2d+1}\neg \R_i\wedge\right.\right.\\  
    \bigwedge_{i\le 2d+1}\neg\R^2_i\wedge
    \left.\left.\dots \wedge
    \bigwedge_{i\le 2d+1}\neg\R^{(d-k)-1}_i\right)\right)[\mathbf{s}/\mathbf{v}]=1.
\end{multline*}
Note that if $\R^{d-k}_{2k+2}[\mathbf{s}/\mathbf{v}]=1$, then the conclusion of the lemma is true. Suppose that $\R^{d-k}_{2k+2}[\mathbf{s}/\mathbf{v}]=0$. Thus, by equation~\eqref{R definition},
\begin{equation*}
    \left(\neg\gamma\wedge \bigwedge_{i\le 2d+1}\neg \R_i\wedge  
    \bigwedge_{i\le 2d+1}\neg\R^2_i\wedge\right. 
    \left.\dots \wedge
    \bigwedge_{i\le 2d+1}\neg\R^{(d-k)-1}_i\right)[\mathbf{s}/\mathbf{v}]=0.
\end{equation*}
Hence, again by de Morgan's law,
\begin{equation*}
    \left(\gamma\vee \bigvee_{i\le 2d+1} \R_i\vee  
    \bigvee_{i\le 2d+1}\R^2_i\vee\right. 
    \left.\dots \vee
    \bigvee_{i\le 2d+1}\R^{(d-k)-1}_i\right)[\mathbf{s}/\mathbf{v}]=1.
\end{equation*}
Therefore, either $\gamma[\mathbf{s}/\mathbf{v}]=1$ or, by Definition~\ref{d-order responsibility},  there is positive $d'\le d-k$ and an agent $i\le 2d+1$ who is $d'$-order responsible under action profile $\mathbf{s}$.
\end{proof}

\end{document}